\documentclass[accepted]{uai2022}

\usepackage[american]{babel}

\usepackage{natbib} 
    \bibliographystyle{plainnat}
    
\usepackage{mathtools} 
\usepackage{booktabs} 
\usepackage{tikz} 


\title{Instructions for Authors: Title in Title Case}

\usepackage{notations}
%

\usepackage{microtype}
\usepackage{graphicx}

\usepackage{hyperref}
\usepackage{url}


\usepackage{amsfonts}       
\usepackage{amsthm}
\usepackage{nicefrac}       
\usepackage{microtype}      
\usepackage{xcolor}         
\usepackage{subcaption}
\usepackage{mathtools}
\usepackage{comment}

\newtheorem{defn}{Definition}
\newtheorem{prop}{Proposition}

\newcommand{\michel}[1]{{\color{red} \textbf{Michel}:#1}}
\renewcommand{\michel}[1]{}

\newcommand{\bx}{\boldsymbol{x}}
\newcommand{\by}{\boldsymbol{y}}
\newcommand{\bs}{\boldsymbol{s}}
\newcommand{\be}{\boldsymbol{e}}
\newcommand{\bp}{\boldsymbol{p}}

\newcommand{\btheta}{\boldsymbol{\theta}}
\newcommand{\balpha}{\boldsymbol{\alpha}}
\newcommand{\parents}{\textbf{Pa}}
\newcommand{\rf}{{\rm ref}}

\newcommand{\bg}{{\rm \textbf{g}}}

\newcommand{\Bf}{{\rm {\bf f}}}

\newcommand{\G}{G}
\newcommand{\X}{\mathcal{X}}
\newcommand{\R}{\mathbb{R}}

\newcommand{\bernhard}[1]{\textbf{\color{red}~B:}{~\color{blue}#1}}
\renewcommand{\bernhard}[1]{}




%









\title{Learning soft interventions in complex equilibrium systems}

\author[1]{\href{mailto:<mbesserve@gmail.com>?Subject=Your UAI 2022 paper}{ Michel Besserve}}
\author[1]{Bernhard Sch\"olkopf}

\affil[1]{Department of Empirical Inference, Max Planck Institute for Intelligent Systems, T\"ubingen, Germany.} 
\begin{document}
\maketitle

\begin{abstract}
  Complex systems often contain feedback loops that can be described as cyclic causal models. Intervening in such systems may lead to counterintuitive effects, which cannot be inferred directly from the graph structure. After establishing a framework for differentiable soft interventions based on Lie groups, we take advantage of modern automatic differentiation techniques and their application to implicit functions in order to optimize interventions in cyclic causal models. We illustrate the use of this framework by investigating scenarios of transition to sustainable economies. 
\end{abstract}

\section{Introduction}
Designing optimal interventions in complex systems, composed of many interacting parts, is a key objective in multiple fields. In the context of socio-economic systems, the design of public policies to improve economic and social welfare is a major source of scientific and political debate. 
Moreover, the positive aspects of socio-economic activities need to be traded-off with their environmental impacts, as their long term consequences may considerably affect societies  \citep{dearing2014safe,sherwood2010adaptability}. Interestingly, a priori intuitive interventions in such systems may lead to paradoxical outcomes. The rebound effect in energy economy, first reported by \citet{jevons1866coal}, is paradigmatic: while the energy efficiency of devices may considerably increase due to technological  improvements, this may trigger an overall increase of energy consumption due to increases in demand \citep{brockway2021energy}. This suggests in particular that efficiency alone may not be the best way to foster a transition towards sustainability, and calls for a quantitative study of optimal interventions in such complex systems \citep{arrobbio2018}. As argued for the case of rebound effects \citep{wallenborn2018}, such unexpected behaviors may reflect balanced causal relationships designed by evolution \citep{andersen2013expect}, and feedback loops \citep{blom2021causality} that maintain a system at an optimal ``equilibrium'' operating point independent from external perturbations, challenging classical causal inference assumptions of faithfulness and acyclicity. 

While interventions have been extensively investigated theoretically in the field of causality \citep{pearl2000causality,imbens2015causal}, the case of systems incorporating feedback loops remains particularly challenging, and therefore led to only limited applications to real-life complex systems.
A possible first step to study such systems is to approximate them by a model that operates at an equilibrium point, and can thus be described by a cyclic structural causal model \citep{bongers2016foundations}. Such models satisfy a self consistent set of equations that, under unique solvability assumptions, fully identifies the operating point, and allows to study interventions. For practical and ethical reasons, interventions that do not change the causal structure, called soft interventions, arguably provide a more realistic account of changes that can be performed in real life systems.  While a restricted set of qualitative results exist for such interventions \citep{blom2020conditional}\michel{add more, bareinboim...}, their quantitative assessment and design in complex systems is made difficult by the analytical intractability of the self-consistency relations. 

In this paper, we propose a framework for a general class of differentiable parametric soft interventions based on Lie groups and leverage recent technical and algorithmic developments allowing learning implicit functional relationships \citep{bai2019deep} to optimize such interventions. After defining Lie interventions and assaying their theoretical properties, we provide a computational framework to optimize them. We illustrate its application to economic models derived from real data, offering a novel approach to computational sustainability. Proofs are provided in Appendix~\ref{app:proofs}. Code is available at \href{https://github.com/mbesserve/lie-inter}{https://github.com/mbesserve/lie-inter}.

\paragraph{Related work.} Various types of economic equilibrium models (EEM) have been used to investigate macroeconomic effect of specific interventions \citep{wiebe2018implementing,wood2018}. Also, experimental design in two-sided marketplaces has been investigated in \citep{johari2022experimental}. 
In contrast to such work, we develop a general optimization framework that allows the optimal design of interventions to achieve specific goals. A restricted set of EEMs have been investigated more extensively from an optimization perspective (see, e.g., \citealt{esteban2004computing}); however, these are restricted to rather specific assumptions and constraints that allow to address optimization with linear programming approaches. 
Instead, we rely on automated differentiation and backpropagation algorithms that allow studying mechanisms and interventions with a broad range of non-linearities. In the field of causality, several studies investigate the relationship between the equilibrium of dynamical systems and structural causal models (SCM) \citep{MooJanSch13,peters2020causal} and how the causal structure can be learnt from data. 
In contrast, we focus on designing soft interventions in an known SCM at equilibrium. 
While the specificity of soft interventions have started to be investigated theoretically in 
structural causal models \citep{rothenhausler2015backshift,kocaoglu2019characterization,jaber2020causal,correa2020general,blom2020conditional}, 
the present work is to the best of our knowledge the first to investigate theoretically and algorithmically the design of soft interventions in cyclic causal models. The algorithmic approach relies on modeling economic equilibrium with deep equilibrium models \citep{bai2019deep}. This approach belongs to the category of implicit deep learning \citep{el2021implicit}, which has been used in a variety of applications such as model predictive control \citep{amos2018differentiable} and multi-agent trajectory modelling \citep{geiger2020learning}. \michel{reduce?}

\section{Motivation and Background}
\label{sec:backgrnd}
\subsection{Environmental Economic models}\label{sec:mrio}
In the face of the increasing severity of climate change and further environmental impacts of human activities, our societies face challenges to transition to more sustainable economies. An overarching difficulty 
is the complexity of the systems that need to be intervened on, which comprise tightly intertwined components, ranging from economic agents to a broad variety of ecosystems \citep{haberl2019contributions}. 

A classical way to represent the economy and its impacts are input-output (IO) multi-sector economic equilibrium models \citep{stadler2018exiobase}, in which economic activities are divided in $d$ interdependent \textit{sectors} and described by a positive $d$-dimensional \textit{output} vector $\bx$ (see Appendix~\ref{app:back}).
We take as a guiding example the demand-driven model introduced by \citet{leontief1951structure}, which is the basis of the \textit{Input-Output analysis} approach to environmental impact assessment. In such models, the sectors' outputs at economic equilibrium $\bx^*$ are dependent on the vector $\by$ gathering final demand for each product (consumed by users instead of being used to make another product). Satisfying the demand of all sectors implies the self-consistent equation
\begin{equation}\label{eq:leontief}
	\bx^*=A\bx^* +\by\,,
\end{equation}
where $A$ is the so-called  \textit{technical  coefficients  matrix}, with $A_{ij}$  the amount  of  each  product $i$ used as input to  produce product $j$. 
An example of technical coefficient matrix estimated from economic data is provided in Fig.~\ref{fig:leontief}.
While such equilibria can be thought of as the asymptotic value of $\bx$ in a dynamic model (see~Appendix~\ref{app:back})
we focus our analysis on the equilibrium equations without consideration for the dynamics that gives rise to it. 
In turn, the socio-economic impacts (e.g., employment) and environmental stressors (e.g., GHG emissions, water use, ...) of each sector's activity is gathered in a vector of \textit{impacts} $\bs$ such that
\begin{equation}\label{eq:impact}
	\bs = R\bx
\end{equation}
where $R$ is a \textit{footprint intensity} matrix such that $R_{ij}$ is the amount of impact of type $i$ generated by unit of output $j$. 
To mitigate major long term negative consequences of environmental stressors (see, e.g., \citealt{dearing2014safe,sherwood2010adaptability}), a reorganization of the global economy is required, which may consist in intervening on economic sectors, their impacts and their interactions reflected in the matrices $A$ and $R$. However, this faces three challenges. 

\begin{figure*}
	\begin{subfigure}{.3\linewidth}
		\includegraphics[width=\linewidth]{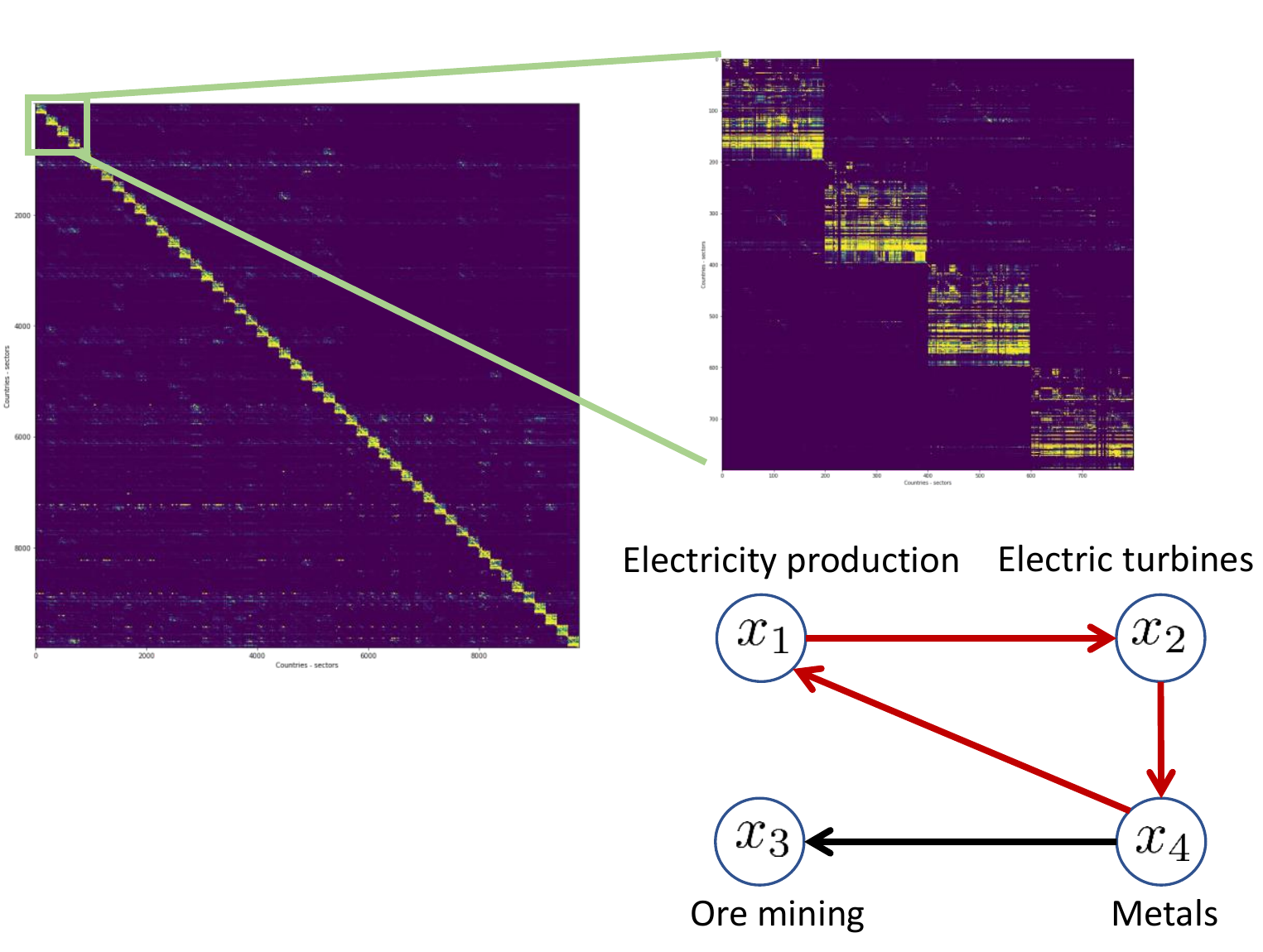}
		\subcaption{\label{fig:leontief}}
	\end{subfigure}
	\hfill
	\begin{subfigure}{.31\linewidth}
		\includegraphics[width=\linewidth]{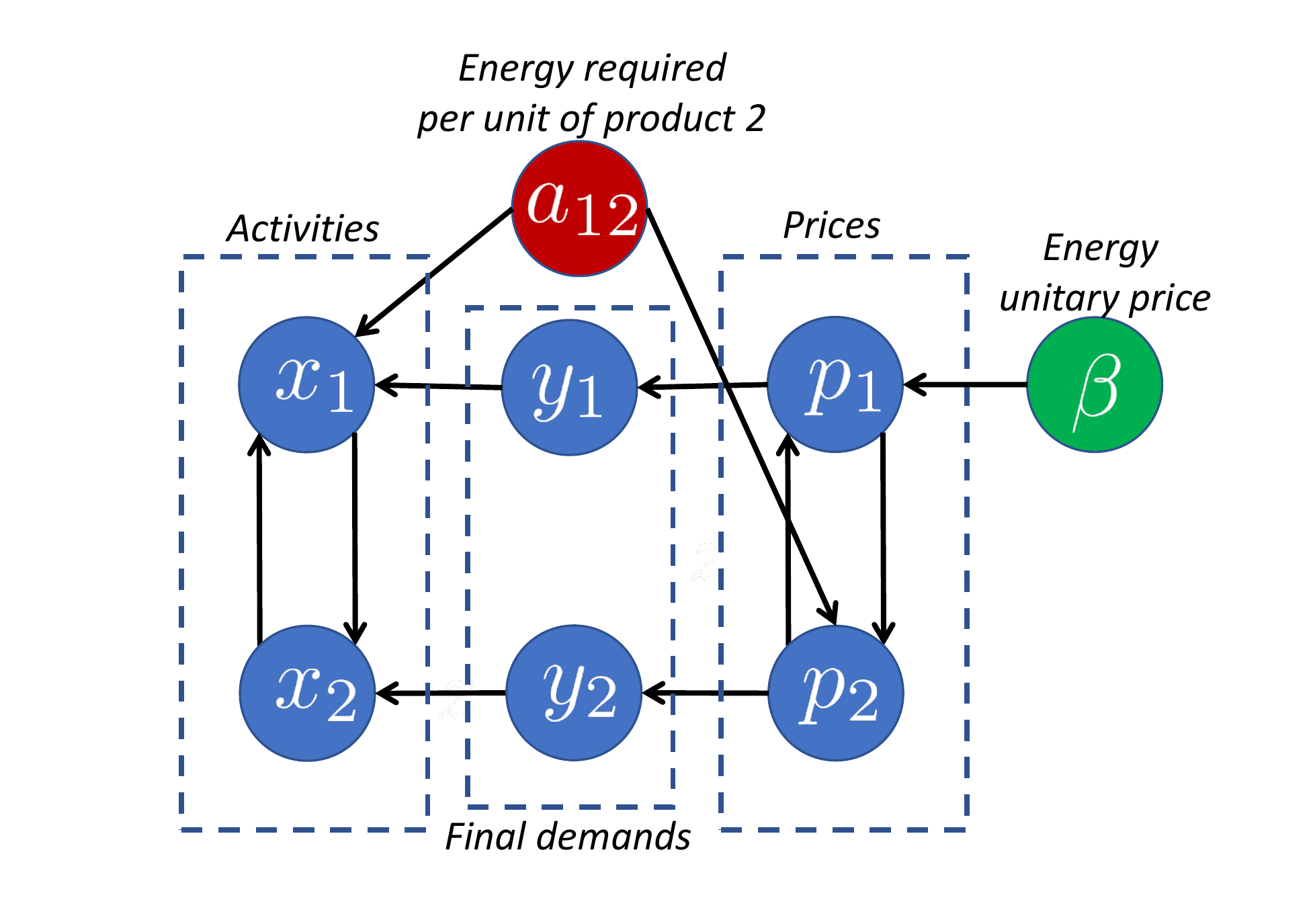}
		\subcaption{\label{fig:pricemod}}
	\end{subfigure}
	\hfill 
	\begin{subfigure}{.37\linewidth}
		\includegraphics[width=\linewidth]{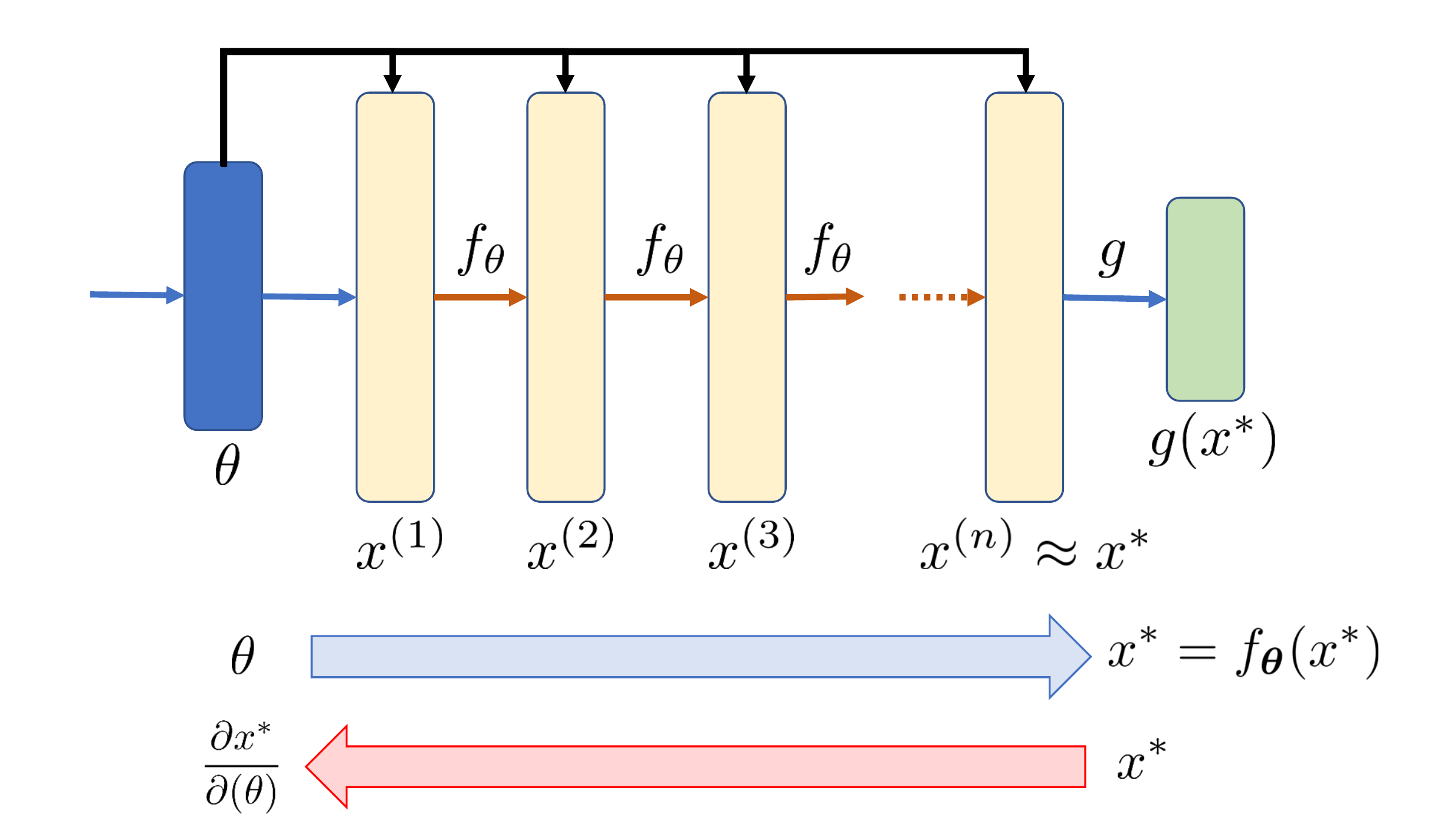}
		\subcaption{\label{fig:deepEq}}
	\end{subfigure}
	\caption{(a) Top left: technical coefficient matrix between 200 sectors and 49 world regions for 2011 (source: Exiobase 3, \citealt{stadler2018exiobase}). Top right: magnification of the top left corner of this matrix. Diagonal blocks reflect the stronger sector dependency within a country. Bottom right: putative example of cyclic dependency between different sectors. (b) Illustration of the causal graph for \textit{rebound trough prices} in a two sector economy. (c) Principle of deep equilibrium models.
	\label{fig:1}}
\end{figure*}

\paragraph{\textit{Challenge 1}: social acceptability.} Reducing a sector's activity may lead to both positive environmental effects (yielding lower footprints) and negative socio-economic impacts (such as reducing economic growth and employment, see Appendix~\ref{sec:supdisc}). Decision makers thus trade off environmental goals with the social acceptability of the chosen policies. 

\paragraph{\textit{Challenge 2}: recurrence between sectors.} The sectors' activities are tightly intertwined by their reciprocal demands, as illustrated 
at the bottom of Fig.~\ref{fig:leontief}: electricity production through renewable energy requires wind turbines, which require metals, while the metal industry requires itself electricity to extract metals from ores and transform them. Such cycles make it challenging to anticipate the system-wide consequences of interventions a particular sector. 

\paragraph{\textit{Challenge 3}: rebound effects.} The complexity of the economic system also manifests itself through balancing mechanisms that reflect the utility maximization behavior of economic agents, such as rebound effects. Consider $\bx^*$ in eq.~\ref{eq:leontief}, which can be written as a function of final demand
\begin{equation}\label{eq:leoninv}
\bx^* = (I-A)^{-1} \by\,.
\end{equation}
In practice, final demand is influenced by prices of each good and often modeled by a static demand curve $d_i$ for good $i$ such that
$y_i = d_i(p_i)$. 
A final demand rebound through prices can be simulated in the Leontief model as follows. 
Energy efficiency of the production of a particular good $j$ corresponds to a decrease of $A_{ej}$, where $e$ indicates the energy sector, but this modification also affects the unit price through energy costs. For simplicity, we define the price vector $\bp$ of goods such that it is proportional to the energy required in all sectors involved in the production of one unit of this good. It can thus be modelled by a self-consistent relation involving the technical coefficient matrix:
\[
\bp^* = A^\top \bp^* +\beta \boldsymbol{\delta}_e\,,
\]
where $\delta_e$ is a canonical basis vector which takes value $1$ for the energy sector, and value $0$ for all other sectors. For illustrative purposes, the overall causal model 
is shown in Fig.~\ref{fig:pricemod} in the case of a two sector economy, with sector 1 being the energy sector. The price-based rebound mechanisms then operates as follows: a decrease of $A_{ej}$ will decrease energy demand on sector $e$, but will also decrease the unit price of goods for sector $j$ (and downstream sectors consuming its goods). 
Because the demand curves $d_j$ are monotonically decreasing, the price drop increases the final demand for these products, which in turn increases economic activity according to eq.~\ref{eq:leoninv}, and their environmental footprint. 
The rebound may thus be avoided by simultaneously intervening on the unit price of energy $\beta$ through a tax policy, so that price level is maintained high and prevents increases of final demand (see Fig.~\ref{fig:pricemod}). Importantly, while eq.~(\ref{eq:leoninv}) provides a linear relationship between activity and final demand, once we assume $\bp$ is price dependent, the system of equations becomes non-linear and finding an analytic expression of the economic equilibrium is nontrivial.
Our approach to designing interventions in cyclic causal models will be applied to models illustrating the above three challenges. 

\subsection{Cyclic causal models}
Interventions and their effects on systems have been investigated using Structural Causal Models (SCM) \citep{pearl2000causality}. In this framework, relationships between observed variables $X_k$ are described by a set of structural assignments 
\[
X_k \coloneqq f_k(\parents_k,\epsilon_k)\,,
\]
where $\parents_k$ indicates the parents of variable $X_k$ in an associated directed causal graph, such as the one illustrated in Fig.~\ref{fig:leontief}. Interventions turn an SCM into a different one, by applying a modification to at least one of its elements. Broadly construed, interventions range from ``hard'' interventions that modify the structure of the graph to ``soft'' interventions that do not \citep{eberhardt2007interventions}. 

While in acyclic graphs, interventions have generic effects on their descendants in the causal graph, and no effects on the parents, \cite{blom2020conditional} have shown that causal effects are less easy to read in graphs containing cycles. 
Anticipating the effect of interventions in cyclic graphs overall requires to estimate the changes in the equilibrium point, which is typically non-trivial. While a variety of approaches may be used (e.g., based on root finding approaches), designing optimal interventions for self-consistent equations that cannot be handled analytically is challenging, especially in high dimensional systems. Recent work in deep neural network has come up with techniques allowing gradient descent based optimization of such equilibrium models \citep{bai2019deep}. 


\subsection{Deep equilibrium models}

 \Citet{bai2019deep} introduced deep learning architecture elements with input-output functional relationships $\bx^*=\bg(\btheta)$ between variables $\bx^*$ and parameters $\btheta$ that are only defined through 
 a self-consistent equation 
\[
\bx^*=\Bf_{\btheta} (\bx^*).
\]
Assuming that for each value $\btheta$ %
there is a unique solution $\bx^*$, the gradient with respect to one parameter component $\theta_k$ can be obtained through another self-consistent equation
\begin{equation*}
\frac{\partial \bx^*}{\partial \theta_k} = \frac{\partial \Bf_{\btheta}}{\partial \theta_k}(\bx^*)+\frac{\partial \Bf_{\btheta}}{\partial \bx}\frac{\partial \bx^*}{\partial \theta_k} \,.
\end{equation*}
Overall, $\bg$ can be integrated as a layer in more complex differentiable models, which, as depicted in Fig.~\ref{fig:deepEq}, can be understood as a cascade of multiple layers with identical functions and shared parameters, with specific accelerated fixed point iteration approaches to compute the forward and backward passes \citep{bai2019deep}. In this paper, we use Anderson's acceleration \citep{walker2011anderson}, which essentially generalizes the forward iteration approach (i.e. iterating $\bx_{k+1}=\Bf(\bx_k)$ until convergence) by leveraging the $m$ previous estimates in order to find a better estimate. Overall, these layers offer a differentiable framework for investigating the behavior of cyclic graphs that we use to design interventions.


\subsection{Lie groups}
Using deep equilibrium models, we can learn differentiable soft interventions compatible with classical optimization frameworks. We will use the concept of Lie groups, which are smooth manifolds of transformations (see Appendix~\ref{app:back} for more background), 
in order to implement smooth soft interventions. 
In short, a group $\G$ is a set of objects equipped with a group ``multiplication'' operation mapping $(g_1,g_2)\in\G^2$ to $g_1 g_2\in\G$ and an inverse operation $g^{-1}$ with the following properties: 
\begin{itemize}
\itemsep0em
\parskip0em
	\item (associativity) $(g_1 g_2) g_3 = g_1 (g_2 g_3)$,
	\item (identity element) there exist a unique identity element $e$ such that for all $g$, $eg=ge=g$,
	\item (inverse) for all $g\in \G$, there exists a unique element $g^{-1}$ such that $g g^{-1}=g^{-1} g=e$.
\end{itemize}
\vspace{-1\topsep}
Groups perform transformations on objects in a set $\X$ through the definition of a group action operation $\varphi$ mapping $(g,x)\in \G\times \X$ to $\varphi_g(x)= g\cdot x\in \X$, such that 
\vspace{-\topsep}
\begin{itemize}
\itemsep0em
\parskip0em
	\item (identity) for all $x\in \X$, $e\cdot x=x$,
	\item (compatibility) for all $(g,h)\in \G\times\G$, for all $x\in \X$, $g\cdot (h\cdot x)=(gh)\cdot x$.
\end{itemize}
\vspace{-\topsep}
A real Lie group is a group that is also a finite-dimensional real smooth manifold (see Appendix~\ref{app:back}), in which the group operations of multiplication and inversion are smooth maps. 
The differentiability of Lie groups will be leveraged to design smooth interventions.

\section{Intervening in smooth models}
\subsection{Smooth causal graphical models}

We define a smooth structural causal model (SSCM)
as a set of variables $\{x_j\}$ related to each other through structural equations and vertices in a directed graph as follows.
\begin{defn}[SSCM]\label{def:SCM}
	A $d$-dimensional smooth structural causal model is a 4-tuple $(\mathcal{X},\mathcal{T},\mathbb{S},\mathcal{G})$ consisting of
	\vspace{-\topsep}
	\begin{itemize}
	\itemsep0em
	\parskip0em
		\item two collections of smooth manifolds $\mathcal{X}=\{\mathcal{X}_i\}_{i=1..d}$ and $\mathcal{T}=\{\mathcal{T}_j\}_{j=1..d}$ ,	
		\item a directed graph $\mathcal{G}=(V,E)$ with set $V$ of $d$ vertices and set $E$ of directed edges between them, each vertex being associated to one variable $x_i\in\mathcal{X}_j$ ,
		\item a set $\mathbb{S}$ of structural assignments 
		$
		\{x_j \coloneqq f_j(\parents_j,\theta_j),  \theta_j \in\mathcal{T}_j\}_{j=1,\dots,d}\, ,
		$
		where $f_k$ are smooth maps, and $\parents_j$ are the variables indexed by the set of parents of vertex $j$ in $\mathcal{G}$.
	\end{itemize} 
	\vspace{-1\topsep}
\end{defn}
Compared to classical definitions of SCMs (see, e.g., \citealt{causality_book}), we have replaced exogenous random variables by deterministic parameters living on a manifold. This general definition does not prevent assigning random variables to some (components of) these parameters. In the cases considered here, $\mathcal{T}_i$ are subsets of Euclidean spaces. We are particularly interested in cyclic SCMs, where there exists at least one directed path linking one vertex to itself. As a consequence, the possible values achieved by each variable have to be chosen among the solutions of the $d$ self-consistent structural equation constraints. We assume the unintervened causal model is locally uniquely solvable.
\begin{defn}
	A SSCM is locally uniquely solvable around a reference point $(\bx^{\rf},\btheta^{\rf})$ whenever there exists a neighborhood $U_{\btheta}$ of $\btheta^{\rf}$ and a neighborhood $U_{\bx}$ of $\bx^{\rf}$ such that for all $\btheta\in U_{\btheta}$ there exists a  unique (self-consistent) solution to the set of structural assignments $\bx^*(\btheta)\in U_{\bx}$.  
\end{defn}
Note that this is adapted to our SSCM definition and differs from the unique solvability definition of \cite{bongers2016foundations}, which was formulated for causal models with random exogenous variables.  
This property is guaranteed by a condition on the Jacobian of the structural equations.
\begin{prop}\label{prop:localSolv}
	We say the SSCM is locally diffeomorphic at $(\bx^{\rf},\btheta^{\rf})$ when $(\bx^{\rf},\btheta^{\rf})$ is a solution and the Jacobian of the mapping $\bx\rightarrow \bx-\Bf(\bx,\btheta^{\rf})$ is invertible. Such a SSCM is uniquely solvable around this reference point and the local mapping	$\btheta \mapsto\bx^*(\btheta)$
	is smooth. 
\end{prop}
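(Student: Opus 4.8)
The plan is to recognize this as a direct consequence of the Implicit Function Theorem (IFT) applied to the residual map of the structural equations. First I would define $F : \mathcal{X}\times\mathcal{T} \to \mathcal{X}$ by $F(\bx,\btheta) = \bx - \Bf(\bx,\btheta)$, where $\Bf = (f_1,\dots,f_d)$ collects the structural maps, each $f_j$ viewed harmlessly as a function of the whole tuple $\bx$ and of $\btheta$ even though it only reads $\parents_j$ and $\theta_j$. Since the $f_j$ are smooth and $\mathcal{X} = \prod_i\mathcal{X}_i$, $\mathcal{T} = \prod_j\mathcal{T}_j$ are finite-dimensional smooth manifolds, $F$ is a smooth map, and by hypothesis $F(\bx^{\rf},\btheta^{\rf}) = 0$.

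Next, working in smooth charts around $\bx^{\rf}$ and $\btheta^{\rf}$ (which reduces everything to the Euclidean statement; in the cases of interest the $\mathcal{X}_i,\mathcal{T}_j$ are already Euclidean), I would note that the partial Jacobian of $F$ with respect to $\bx$ at the reference point equals $\mathrm{Id} - \partial_{\bx}\Bf(\bx^{\rf},\btheta^{\rf})$, which is exactly the Jacobian of $\bx\mapsto \bx-\Bf(\bx,\btheta^{\rf})$ and hence invertible by assumption. The IFT then yields neighborhoods $U_{\btheta}\ni\btheta^{\rf}$ and $U_{\bx}\ni\bx^{\rf}$ and a unique smooth map $\bx^*:U_{\btheta}\to U_{\bx}$ with $\bx^*(\btheta^{\rf}) = \bx^{\rf}$ and $F(\bx^*(\btheta),\btheta) = 0$ for all $\btheta\in U_{\btheta}$; after possibly shrinking $U_{\bx}$, $\bx^*(\btheta)$ is the \emph{only} point of $U_{\bx}$ solving $F(\cdot,\btheta) = 0$. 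Unwinding $F(\bx^*(\btheta),\btheta) = 0$ componentwise gives $x^*_j(\btheta) = f_j(\parents_j,\theta_j)$, i.e. $\bx^*(\btheta)$ satisfies the full self-consistent system, which is precisely the local unique solvability required by the definition above, while smoothness of $\btheta\mapsto\bx^*(\btheta)$ is the regularity clause of the IFT.

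There is essentially no deep obstacle here; the only points requiring care are bookkeeping ones: passing to charts so that the manifold-valued statement follows from the standard Euclidean IFT, checking via the chain rule that the chart expression of $F$ has the claimed partial derivative (using that each $f_j$ depends on only a subset of coordinates), and observing that smoothness of the solution map is chart-independent so the conclusion is well posed on the manifolds. If one prefers to avoid charts, the manifold version of the IFT — or equivalently the inverse function theorem applied to $(\bx,\btheta)\mapsto(F(\bx,\btheta),\btheta)$ — can be invoked directly.
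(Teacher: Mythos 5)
Your proof is correct: the hypothesis that the Jacobian of $\bx\mapsto\bx-\Bf(\bx,\btheta^{\rf})$ is invertible is precisely the nondegeneracy condition of the Implicit Function Theorem for the residual map $F(\bx,\btheta)=\bx-\Bf(\bx,\btheta)$, and the IFT's conclusions (a smooth solution map $\btheta\mapsto\bx^*(\btheta)$ on a neighborhood $U_{\btheta}$, unique in a possibly shrunken $U_{\bx}$) are exactly the local unique solvability and smoothness claimed. The paper reaches the same conclusion by a different presentation: instead of citing the IFT, it applies the inverse function theorem for manifolds to the augmented map $(\bx,\btheta)\mapsto(\bx-\Bf(\bx,\btheta),\btheta)$ — the very alternative you mention in your closing sentence — and then reconstructs the IFT conclusion by hand, showing that the solution set $S=g^{-1}((\{0\}\times\mathcal{T})\cap V)$ is a submanifold of the same dimension as $\mathcal{T}$, that the projection $\pi_{|S}$ onto $\mathcal{T}$ is an open smooth embedding, hence that its image contains a neighborhood of $\btheta^{\rf}$ on which a solution exists, with uniqueness following from injectivity of the diffeomorphism $g$. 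Your route is shorter and delegates the bookkeeping to the standard IFT (in charts, which is harmless here since the spaces are effectively Euclidean and the subtraction in $\bx-\Bf$ presupposes a linear structure anyway); the paper's route is a self-contained manifold-level derivation that makes explicit where the dimension count and the openness of the parameter projection come from, at the cost of considerably more machinery. Mathematically the two arguments rest on the same invertibility hypothesis and are equivalent.
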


In the context of IO analysis presented in Section~\ref{sec:mrio}, the variables can be the sector's outputs and unit prices. For eq.~(\ref{eq:leontief}), the resulting SSCM thus contains the affine structural assignments associated to each component of $\bx$ 
\[
\mathbb{S}=\{x_k\coloneqq\sum_j A_{kj} x_j +y_k\}\,,
\]
which are clearly smooth, and the $\{A_{kj},y_k\}$'s may be assumed fixed or free parameters within an interval.

\subsection{Lie interventions}
We will consider interventions parameterized by an element $u$ that turns the unintervened equilibrium solution $\bx^*(\btheta)$ into the \textit{intervened equilibrium solution} $\bx^{(u)}(\btheta)$ over a range of values of $\btheta$. In particular,
we define Lie interventions implemented through the action of a Lie group. 
\begin{defn}[Lie intervention]\label{def:lieInter}
	A Lie intervention on an SSCM $\mathcal{M}=(\mathcal{X},\mathcal{T},\mathbb{S},\mathcal{G})$ with a set of smooth structural assignments $\mathbb{S}$ is a pair $(L,\varphi)$ where $L$ is a Lie group and  a smooth group action $\varphi\,:\,L\times \mathbb{S}\rightarrow \mathbb{S}$. The action defines a family of intervened SSCMs $\mathcal{M}^{(g)}=(\mathcal{X},\mathcal{T},\varphi(g,\mathbb{S}),\mathcal{G})$, for $g$ in a neighborhood of the identity within $L$. 
\end{defn}
Note in particular that applying the identity element of the group leads to the original (unintervened) causal model.
Such interventions preserve unique solvability. 

\begin{prop}[Solvability]\label{prop:liesolv}
	For a Lie intervention on a locally diffeomorphic SSCM, there is a neighborhood $U_{L}$ of the identity $e$ in $L$ such that the intervention is soft, the family of intervened SSCMs is locally uniquely solvable and the local mapping to the intervened solution $(g,\btheta) \mapsto \bx^{(g)}(\btheta)$ 
	is smooth.
\end{prop}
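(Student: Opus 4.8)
The plan is to reduce the whole statement to a single application of the implicit function theorem (IFT), treating the group element $g$ and the parameter vector $\btheta$ as joint parameters. First I would pass to local charts of the manifolds $\mathcal{X}_i$ and $\mathcal{T}_i$ around $(\bx^{\rf},\btheta^{\rf})$ (in the cases of interest these are already Euclidean, so this is vacuous), stack the intervened structural maps read off from $\varphi(g,\mathbb{S})$ into a vector field $\Bf^{(g)}_{\btheta}$, and define the self-consistency residual $H(\bx,g,\btheta) \coloneqq \bx - \Bf^{(g)}_{\btheta}(\bx)$. Two observations are then immediate: $H$ is jointly smooth in $(\bx,g,\btheta)$, because $\varphi$ is a smooth group action, each $f_j$ is smooth, and evaluation and composition of smooth maps are smooth; and $H(\bx^{\rf},e,\btheta^{\rf}) = 0$, since the identity element recovers the original assignments $\mathbb{S}$, for which $(\bx^{\rf},\btheta^{\rf})$ is a solution by hypothesis.

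Next I would compute the partial Jacobian $\partial H/\partial\bx$ at the base point $(\bx^{\rf},e,\btheta^{\rf})$. Because $H(\cdot,e,\btheta^{\rf})$ coincides with the map $\bx\mapsto\bx-\Bf(\bx,\btheta^{\rf})$ of the unintervened model, this Jacobian equals $I-\frac{\partial\Bf}{\partial\bx}(\bx^{\rf},\btheta^{\rf})$, which is invertible by the locally-diffeomorphic hypothesis invoked in Proposition~\ref{prop:localSolv}. The IFT then provides open neighborhoods $U_L\ni e$, $U_{\btheta}\ni\btheta^{\rf}$, $U_{\bx}\ni\bx^{\rf}$ and a unique smooth map $\psi\colon U_L\times U_{\btheta}\to U_{\bx}$ with $H(\psi(g,\btheta),g,\btheta)=0$, $\psi(e,\btheta^{\rf})=\bx^{\rf}$, and such that $\psi(g,\btheta)$ is the only zero of $H(\cdot,g,\btheta)$ in $U_{\bx}$. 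Setting $\bx^{(g)}(\btheta)\coloneqq\psi(g,\btheta)$ immediately gives both remaining assertions: for each $g\in U_L$ the model $\mathcal{M}^{(g)}$ is locally uniquely solvable (with the common neighborhoods $U_{\bx}$, $U_{\btheta}$ and reference point $(\bx^{(g)}(\btheta^{\rf}),\btheta^{\rf})\in U_{\bx}\times U_{\btheta}$), and $(g,\btheta)\mapsto\bx^{(g)}(\btheta)$ is smooth.

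For softness I would note that each $\mathcal{M}^{(g)}$ carries the graph $\mathcal{G}$ of $\mathcal{M}$ by construction (Definition~\ref{def:lieInter}) and that, since the action is the identity at $g=e$, continuity lets one shrink $U_L$ so that for $g\in U_L$ every transformed assignment is still a map of the same parent/parameter tuple, so no edge is added or removed; hence the intervention is soft. One last shrinking of $U_L$ so that $\det(\partial H/\partial\bx)\neq 0$ on the graph of $\psi$ — possible by continuity of the determinant — makes each $\mathcal{M}^{(g)}$ locally diffeomorphic as well, so Proposition~\ref{prop:localSolv} applies verbatim to the intervened models. I expect the only delicate point to be this passage to neighborhoods: the group action is a priori defined only near $e$, so one must verify that the region on which $H$ is smooth and $\partial H/\partial\bx$ is nondegenerate genuinely contains a product neighborhood $U_L\times U_{\btheta}$ of $(e,\btheta^{\rf})$; apart from that, and the routine chart bookkeeping when the $\mathcal{X}_i$ are genuine manifolds, the result is a direct corollary of the implicit function theorem.
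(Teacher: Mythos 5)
Your argument is correct and follows essentially the same route as the paper: both treat $(g,\btheta)$ as an extended parameter, observe that the intervened assignments are jointly smooth because $\varphi$ and the $f_j$ are smooth, note that at $g=e$ the Jacobian $I-\frac{\partial\Bf}{\partial\bx}(\bx^{\rf},\btheta^{\rf})$ is invertible by the locally-diffeomorphic hypothesis, and then extract a product neighborhood $U_L\times U_{\btheta}$ and argue softness by continuity near the identity. The only cosmetic difference is that you invoke the implicit function theorem directly, whereas the paper reduces to Proposition~\ref{prop:localSolv} (itself an inverse-function-theorem argument), so the two proofs are substantively the same.
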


\paragraph{Multiplicative Lie interventions.}
A simple way of intervening on an arbitrary system is to multiply one selected assignment by a strictly positive scalar coefficient. We can consider $\R^*_+$ equipped with multiplication as a Lie group, that acts on  a node by rescaling its structural assignment. 
Several such \textit{scalar} Lie interventions can then be combined into a \textit{distributed} intervention on a set of nodes instead of a single one. A group element is then a strictly positive vector $\balpha>0$ acting on assignments indexed by $I$ such that
\[
\balpha \cdot \mathbb{S}_{| I} = \{x_k \coloneqq \alpha_k f_k(x_k,\theta_k), k\in  I\}\,.
\]
In the context of Input-Output models presented in Section~\ref{sec:mrio}, applying this intervention can be seen as reducing or increasing the demand for products of specific sectors. Reducing the demand for a sector with large GHG emissions is for example a relevant objective for the transition to a sustainable economy and may be implemented by public policy in various ways (taxes, norms, ...). Such interventions are investigated in  industrial ecology \citep{wood2018}.

In the context of our guiding example, the influence of multiplicative interventions has an intuitive real world interpretation. However, \textit{shift interventions} (using the additive group, acting by addition on a structural assignment) may also be an easily interpretable choice in some settings, and have been exploited for causal inference \citep{rothenhausler2015backshift}. Moreover, some settings may require other classical, possibly multidimensional, Lie groups (e.g. \citet{besserve2018aistats} exploit the group of rotations of the $n$-dimensional Euclidean space $SO(n)$). 
Finally, in contexts where the model stems from a mechanistic model, e.g. relying on physics equations, Lie interventions that change meaningful model parameters may act on structural equations in more complex ways. 




\begin{figure*}
\begin{subfigure}{.32\linewidth}
	\includegraphics[width=\linewidth]{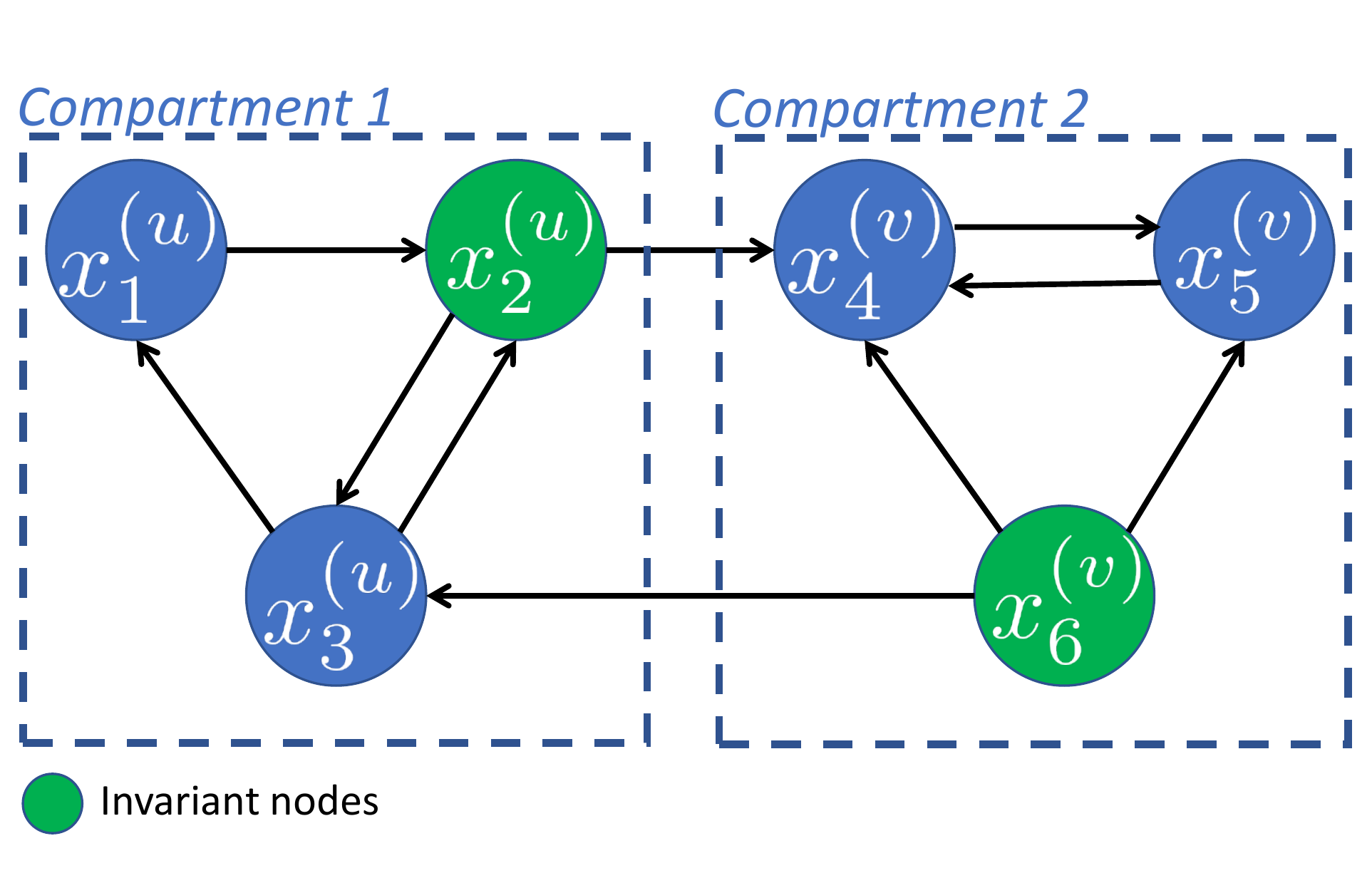}
	\subcaption{\label{fig:comparInter}}
\end{subfigure}
\hfill
		\begin{subfigure}{.33\linewidth}
	\includegraphics[width=\linewidth]{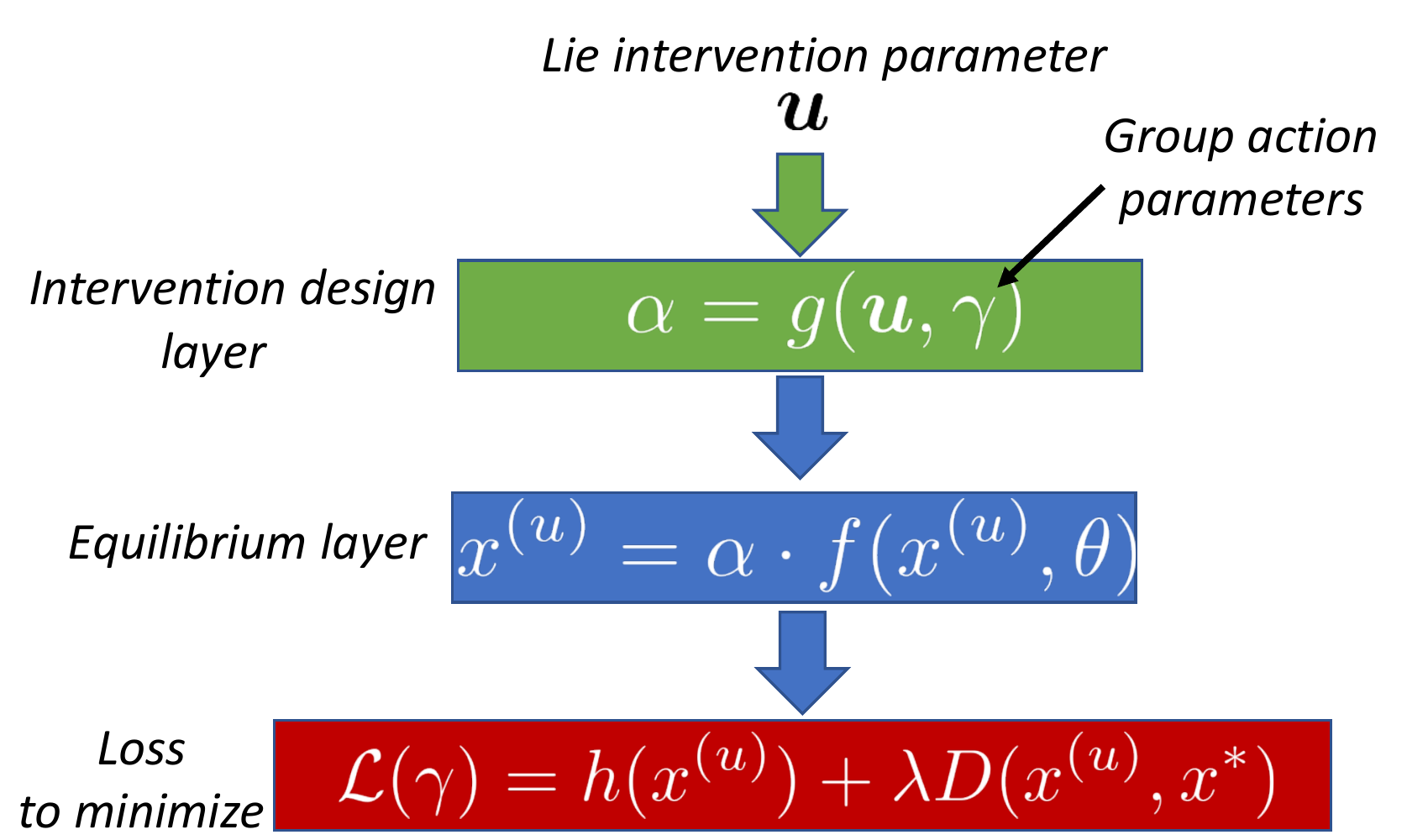}
	\subcaption{\label{fig:multOptim}}
\end{subfigure}
\hfill
	\begin{subfigure}{.32\linewidth}
	\includegraphics[width=\linewidth]{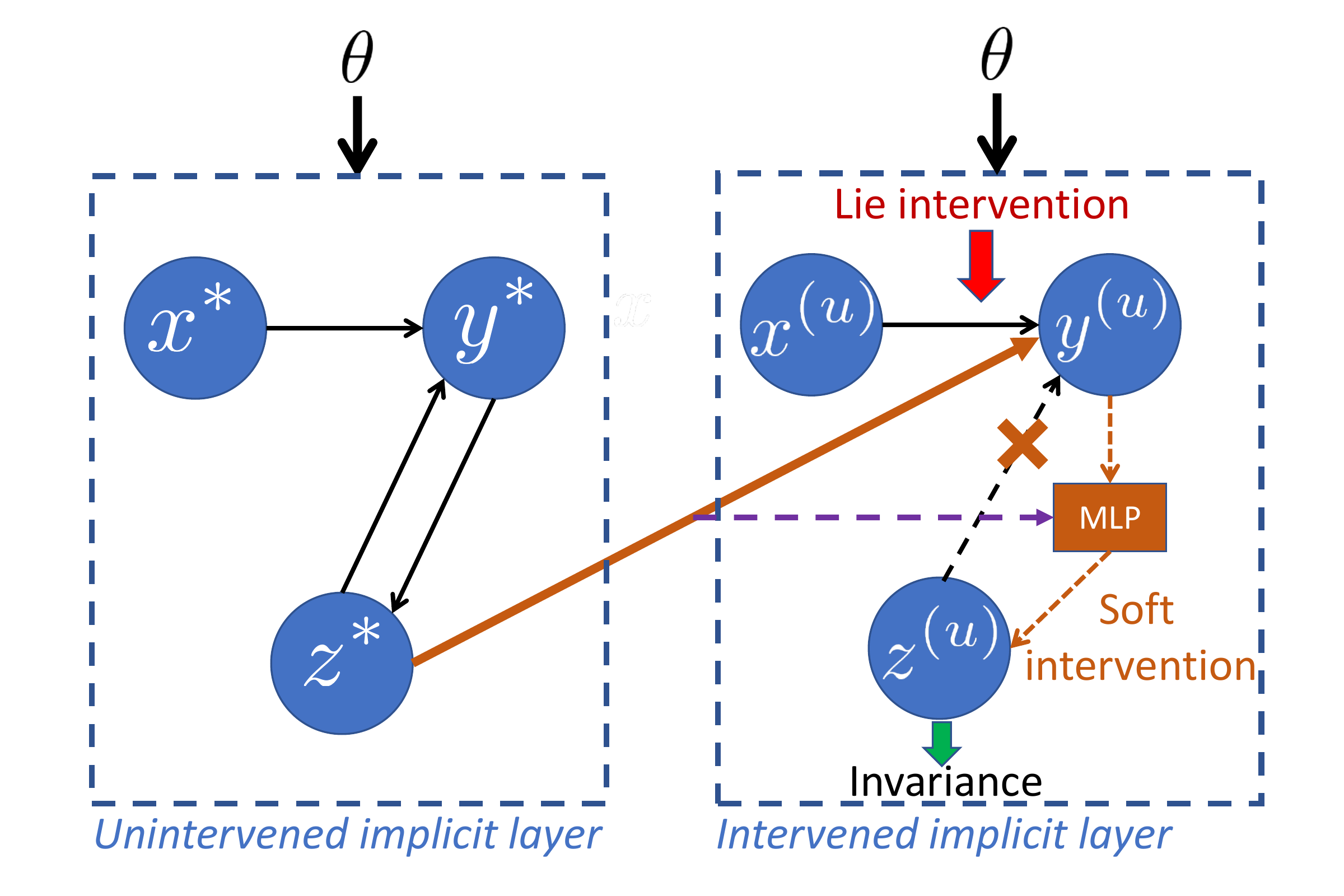}
		\subcaption{\label{fig:invarInter}}
	\end{subfigure}
	\caption{(a) Illustration of a compartmentalized intervention: enforcing invariance of the green nodes allows each compartment to be independently influenced by two (invariant) interventions $u$ and $v$. (b) Architecture for Lie intervention optimization. The equilibrium layer is controlled by intervention parameters and a loss is applied to its output. (c) Schematic representation of the procedure to learn invariant soft intervention ($y$: intervened node, $z$: invariant and auxiliary node). A multilayer perceptron (MLP) learns the soft intervention enforcing invariance of $z^{(u)}$ over a range of parameter values. \label{fig:design}}
\end{figure*}

\subsection{Invariant soft interventions}\label{sec:invar}
The rebound effect is paradigmatic of interventions that may trigger undesired effects that we wish to prevent. To this end, simultaneous interventions on other parts of a system have been considered in applications. For example, a rebound through prices can be prevented by a simultaneous auxiliary intervention of prices through taxes, such that the prices remain invariant to the overall intervention. Using the SSCM framework, we theoretically investigate the conditions under which some variables of the causal model can be maintained invariant to the Lie intervention on others.

\paragraph{Motivating example.}
Consider the following SSCM with parameters $\btheta=(\tau,\alpha,\beta,\gamma)$ with distributed multiplicative Lie intervention $\boldsymbol{u}$:
\vspace{-\topsep}
\begin{equation}\label{eq:motivex}
\setlength{\jot}{0pt}
\left\{
\begin{array}{ccl}
     	x &= &\tau \,,\\
	y &= &u_y (\alpha x +\beta z) \,, \\
	z &=& u_z \gamma y \,.
\end{array}
\right.
\end{equation}
By choosing $u_z=\frac{1}{u_y}$, the intervened equilibrium solution component $z^{(u)}$ becomes insensitive to multiplicative interventions $(u_y,u_z)$, such that $z^{(\boldsymbol{u})}(\btheta)=z^*(\btheta)$ for arbitrary values of parameters $\btheta$ in a neighborhood of the reference parameter (see Appendix~\ref{app:add}). This result suggests that the influence of soft interventions ($u_y$ in this example) can be restricted to a subset of nodes, by choosing a second intervention ($u_z$ in this example) on an auxiliary variable. However, it is unclear whether this result still holds when the functional assignment of $z$ becomes non-linear. 

To frame this question in a general setting, we introduce a class of soft interventions under invariance constraint.
\begin{defn}[Invariant soft interventions]
	Given an SSCM with Lie intervention from group $L$ on node $i$. The intervention leaves node $j$ \textit{invariant} by \textit{leveraging} node $k$ if for all group elements $u$ in a neighborhood $\mathcal{N}$ of the identity, there exists a soft intervention on node $k$, $f_k^{(u)}(\parents_k,\btheta)$, replacing functional assignment $f_k$ such that the intervened node value $x^{(u)}_j$ satisfies $x^{(u)}_j(\btheta) = x^*_j(\btheta)$ in a neighborhood of the reference parameter. Node $i$ is called the intervened node, node $j$ is called the invariant node, and node $k$ is called the auxiliary node.
\end{defn}
\paragraph{Remarks:} The soft intervention property is key, as it entails that the use of an auxiliary variable to enforce the invariance constraint must only exploit the information available to this node as defined by its parents in the unintervened graph (and no parameter values). This constraint makes deployment more realistic in a complex system, as intervening does not require supervision by an external entity monitoring the whole system. Unless otherwise stated, the auxiliary node will be chosen identical to the invariant node.

Let us denote $\bx_{-j}$ and $\Bf_{-j}$ the vector and mapping with the $j$-th component removed. We also define two quantities important for the existence of such interventions. The partial derivative $\frac{\partial x^*_j}{\partial x_k }_{|\btheta=\btheta^{\rf}}$ is obtained by performing a hard intervention $x_k=\lambda$ leading to equilibrium value $x^{(\lambda)}_j(\btheta^{\rf})$, and computing the derivative $\frac{d x^{(\lambda)}_j}{d \lambda}_{|\lambda = x^*_k(\btheta^{\rf})}$. The Jacobian $J^{\btheta}_{x^*_{\parents_k}}(\btheta^{\rf})$ is the Jacobian of the mapping from the parameters $\btheta$ to the vector consisting of the parent nodes of $k$ at equilibrium. Based on these two quantities, we have the following sufficient condition.
\begin{prop}\label{prop:invar}
	Consider an SSCM locally diffeomorphic at $(\bx^{\rf},\btheta^{\rf})$ with intervened/invariant/auxiliary triplet of nodes $(i,j\neq i,k\neq i)$. If the Jacobian of the mapping $\bx_{-j}\rightarrow \bx_{-j}-\Bf_{-j}(\bx_{-j},\btheta^{\rf})$ is invertible, $J^{\btheta}_{x^*_{\parents_k}}(\btheta^{\rf})$ has full column rank, and $\frac{\partial x^*_j}{\partial x_k }_{|\btheta=\btheta^{\rf}}\neq 0$, then the intervention on $i$ leaves node $j$ invariant by leveraging node $k$. 
\end{prop}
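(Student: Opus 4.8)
The plan is to set up the invariance constraint as a fixed-point problem parametrised by the intervention parameter $u$ and the auxiliary-node correction, and then apply the implicit function theorem twice. First I would fix the intervened node $i$: a Lie intervention on node $i$ with group element $u$ replaces $f_i$ by $\varphi(u,f_i)$, and by Proposition~\ref{prop:liesolv} the intervened system remains locally uniquely solvable for $u$ near the identity. The auxiliary-node modification I want to find is a new assignment $f_k^{(u)}$; rather than search over functions directly, the key reduction is to note that since node $k$ is only allowed a \emph{soft} intervention depending on its parents $\parents_k$, and since the equilibrium value of $\parents_k$ at the reference is $x^*_{\parents_k}(\btheta^{\rf})$, it suffices to prescribe the \emph{value} $x_k = \mu$ that the auxiliary node should take at equilibrium for each $u$, and then realise $\mu$ by choosing $f_k^{(u)}$ so that $f_k^{(u)}(x^*_{\parents_k}(\btheta),\btheta) = \mu(u,\btheta)$ on a neighbourhood. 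This last realisation step is exactly where the full-column-rank hypothesis on $J^{\btheta}_{x^*_{\parents_k}}(\btheta^{\rf})$ enters: it guarantees that the map $\btheta \mapsto x^*_{\parents_k}(\btheta)$ is an immersion near $\btheta^{\rf}$, so a desired target function of $\btheta$ can be pulled back to a (smooth) function of the parent values, i.e.\ a bona fide soft intervention exists.

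Next I would analyse the reduced system in which node $k$ is held at value $\mu$ (a hard intervention on $k$) and node $i$ carries the Lie intervention $u$. Removing node $k$'s equation leaves the map $\bx_{-k} \mapsto \bx_{-k} - \Bf_{-k}(\bx_{-k},\btheta^{\rf})$ with $x_k$ substituted by $\mu$; but the hypothesis is phrased in terms of removing node $j$, so the cleaner route is: the invariance we want is $x^{(u)}_j(\btheta) = x^*_j(\btheta)$, equivalently the residual
\[
G(u,\mu,\btheta) \coloneqq x^{(u,\mu)}_j(\btheta) - x^*_j(\btheta)
\]
vanishes identically in $\btheta$ near $\btheta^{\rf}$, where $x^{(u,\mu)}$ denotes the equilibrium of the system with the Lie intervention $u$ on $i$ and the hard intervention $\mu$ on $k$. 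At $u=e$, $\mu = x^*_k(\btheta^{\rf})$ we have $G = 0$. The derivative of $G$ with respect to $\mu$ is, by the chain rule through the equilibrium map, $\frac{\partial x^*_j}{\partial x_k}_{|\btheta^{\rf}}$ (this is precisely the quantity defined via the hard-intervention derivative in the paragraph preceding the statement), which is nonzero by hypothesis. The invertibility of the Jacobian of $\bx_{-j} \mapsto \bx_{-j} - \Bf_{-j}$ is what makes the equilibrium map $(u,\mu,\btheta) \mapsto x^{(u,\mu)}_{-j}$ well-defined and smooth after we treat $x_j$ itself as pinned — equivalently it certifies that once $x_j$ is fixed the rest of the system solves uniquely, so that the one remaining scalar equation (node $j$'s own assignment) can be solved for $\mu$.

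Then the implicit function theorem applied to $G(u,\mu,\btheta)=0$ in the variable $\mu$, with parameters $(u,\btheta)$, yields a smooth function $\mu = \mu(u,\btheta)$ near $(e,\btheta^{\rf})$ with $G(u,\mu(u,\btheta),\btheta) \equiv 0$, i.e.\ $x^{(u)}_j(\btheta) = x^*_j(\btheta)$. Finally I would go back and discharge the ``realise $\mu$ as a soft intervention'' obligation: using that $\btheta \mapsto x^*_{\parents_k}(\btheta)$ is an immersion (full column rank), choose a local smooth left inverse $\psi$ so that $\psi(x^*_{\parents_k}(\btheta)) = \btheta$; then $f_k^{(u)}(\parents_k,\btheta) \coloneqq \mu\bigl(u,\psi(\parents_k)\bigr)$ is a smooth function of the parents alone, agrees with the required equilibrium value along the reference orbit, and — because at $u=e$ it reduces to $x^*_k$, hence is $C^0$-close to $f_k$ — keeps the intervention soft and the whole system locally uniquely solvable for $u$ near $e$ by the same Jacobian argument as in Proposition~\ref{prop:liesolv}.

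The main obstacle I expect is bookkeeping the three different ``reduced'' systems consistently: the full system, the system with node $j$'s equation removed, and the system with node $k$ hard-intervened, and checking that the non-degeneracy hypotheses (invertibility of the $(-j)$-Jacobian, $\frac{\partial x^*_j}{\partial x_k}\neq 0$) really do combine to give invertibility of exactly the derivative $\partial_\mu G$ that the implicit function theorem needs — in other words, verifying that fixing $x_j$ and solving the complementary $(d-1)$ equations is equivalent to the $(-j)$-Jacobian condition, and that the leftover scalar equation has nonzero $\mu$-derivative precisely because $\frac{\partial x^*_j}{\partial x_k}\neq 0$. The soft-intervention realisation via the immersion $x^*_{\parents_k}$ is conceptually the other delicate point, but it is a routine constant-rank/immersion argument once stated carefully.
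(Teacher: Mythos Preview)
Your overall plan is sound and lands on the same two implicit–function–theorem ingredients the paper uses, but you and the paper make \emph{dual} choices about which node to hard-intervene on first, and that duality is exactly where your write-up gets tangled.

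The paper's route is: hard-intervene on the \emph{invariant} node, setting $x_j \coloneqq x_j^*(\btheta)$; the $(-j)$-Jacobian hypothesis then directly certifies that this reduced system (plus the Lie intervention $u$ on $i$) is locally diffeomorphic, yielding a smooth equilibrium $x^{(u)}(\btheta)$ and in particular $x^{(u)}_k(\btheta)$. Only afterward is $\frac{\partial x^*_j}{\partial x_k}\neq 0$ invoked, via the inverse function theorem, to write $x^{(u)}_k = \phi^{(u)}(x_j^*(\btheta))$; the soft intervention is then the composite $f_k^{(u)} = \phi^{(u)} \circ x_j^*(\cdot) \circ \psi^{(u)}$, with $\psi^{(u)}$ the local inverse of $\btheta \mapsto \bx^{(u)}_{\parents_k}(\btheta)$ coming from the full-column-rank hypothesis.

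Your route hard-intervenes on the \emph{auxiliary} node $k$ and solves $G(u,\mu,\btheta)=0$ for $\mu$. That is perfectly viable, but the well-posedness of $x^{(u,\mu)}$ requires the $(-k)$-Jacobian to be invertible, not the $(-j)$-Jacobian. You notice the mismatch (``the hypothesis is phrased in terms of removing node $j$'') and try to repurpose the $(-j)$-condition by saying it lets you pin $x_j$ and then solve ``the one remaining scalar equation'' for $\mu$; but that sentence conflates pinning $x_j$ with hard-intervening on $x_k$ and does not actually produce a well-defined $x^{(u,\mu)}$. A clean way to rescue your route is to observe that the very \emph{definition} of $\frac{\partial x^*_j}{\partial x_k}$ (via a hard intervention $x_k=\lambda$) already presupposes local unique solvability of the $(-k)$-reduced system, so $x^{(u,\mu)}$ is well-defined without further assumption; in your argument the stated $(-j)$-Jacobian hypothesis then plays no role. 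Alternatively, switch to the paper's ordering and hard-intervene on $j$ first: then the $(-j)$-Jacobian enters exactly where stated, and the nonvanishing derivative condition is used in a separate inverse-function step rather than as $\partial_\mu G$.

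Your treatment of the full-column-rank condition (pull $\mu(u,\btheta)$ back to a smooth function of $\parents_k$ via a local left inverse of the immersion) matches the paper's use of $\psi^{(u)}$. One refinement: the immersion you actually need to invert is $\btheta \mapsto \bx^{(u)}_{\parents_k}(\btheta)$ at the \emph{intervened} equilibrium, not $\btheta \mapsto x^*_{\parents_k}(\btheta)$; the paper handles this by taking $\psi^{(u)}$ to depend on $u$ and appealing to continuity near $u=e$.
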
 
This result suggests that the motivating example of eq.~(\ref{eq:motivex}) can be extended, in a neighborhood of the identity, beyond the linear case, when the number of free parameters considered remains low relative to the number of parents of the auxiliary node. However, as can be seen in the proof, the soft intervention on the auxiliary variable is given by an implicit function theorem, suggesting non-parametric models are necessary to learn it (based e.g. on automated differentiation methods).  This will be described in Sec.~\ref{sec:design}.

\begin{figure*}[h]
	\begin{subfigure}{.33\linewidth}
	\includegraphics[width=\linewidth]{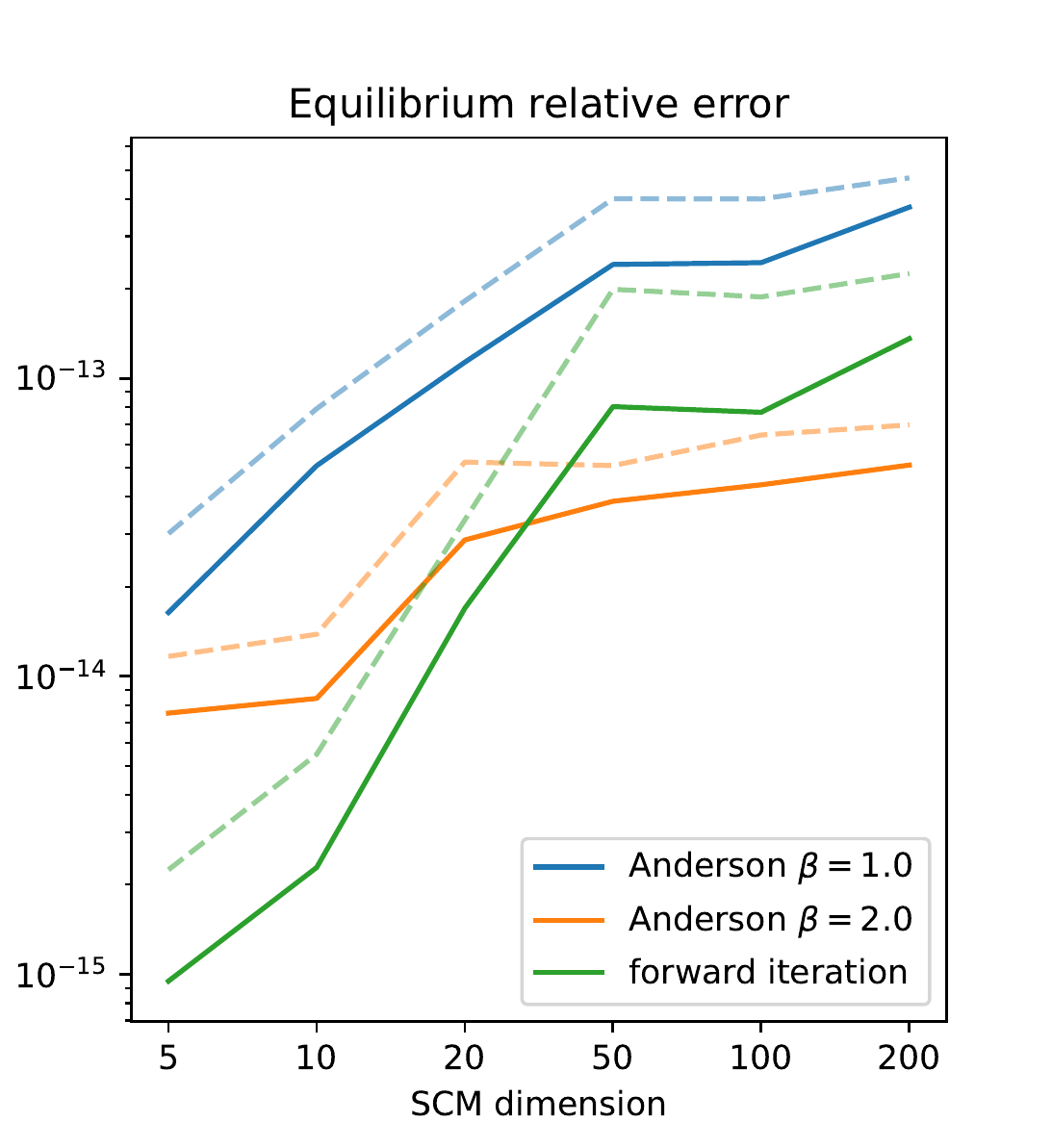}
		\subcaption{\label{fig:conv}}
	\end{subfigure}	
	\hfill
		\begin{subfigure}{.29\linewidth}
	\includegraphics[width=\linewidth]{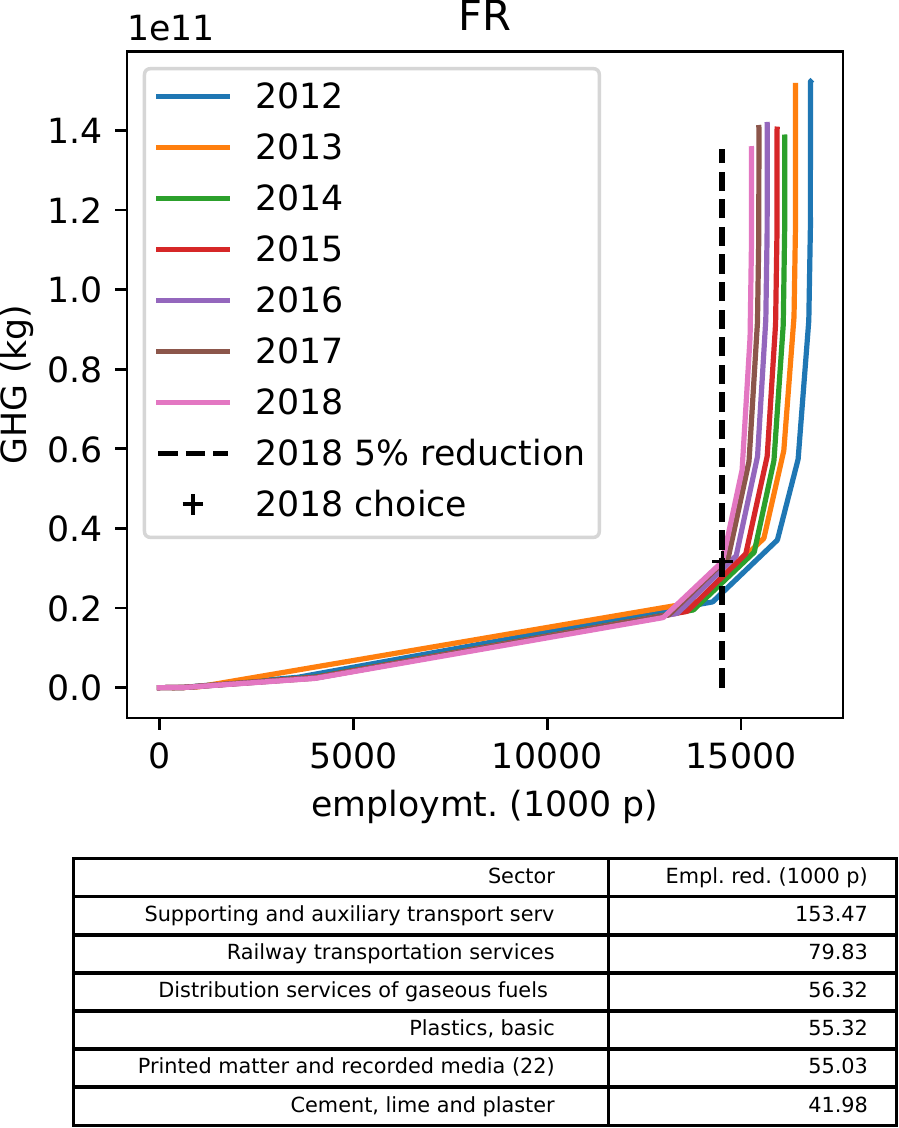}
		\subcaption{\label{fig:countOpt}}
	\end{subfigure}	
	\hfill
	\begin{subfigure}{.29\linewidth}
	\includegraphics[width=\linewidth]{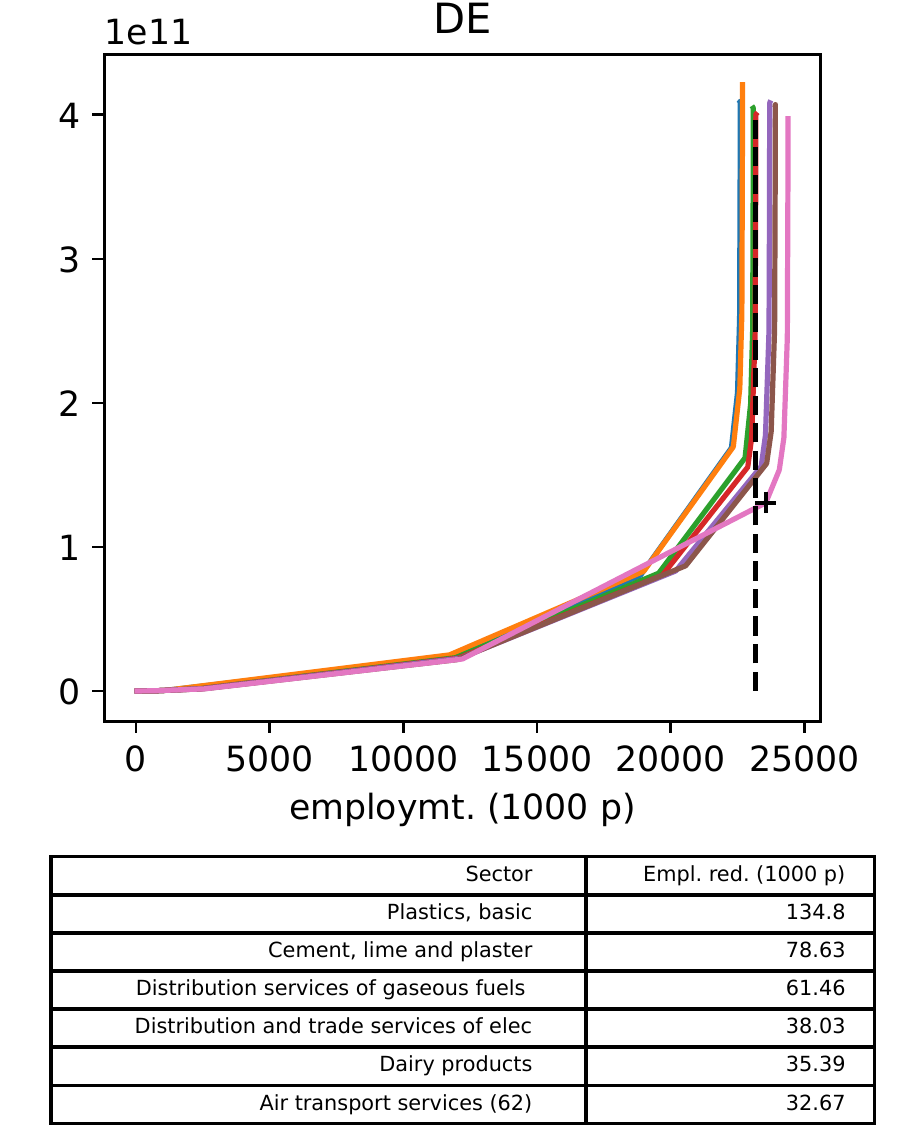}
		\subcaption{\label{fig:countOptDE}}
	\end{subfigure}	
	\caption{(a) Equilibrium relative error for different methods and SCM dimensions (solid: mean, dashed: mean+std). (b-c) Outcome of Lie intervention optimization on country models of GHG emission reduction in France (b) and Germany (c) for varying values of $\lambda$ in eq.~(\ref{eq:joboptim}), based on economic models estimated from different years. For year 2018, dashed lines indicates 5\% reduction in employment and the cross the corresponding $\lambda$ choice. Tables show sectors with largest employment reduction for 2018. \label{fig:exps}}
\end{figure*}

\subsection{Compartmentalized interventions}
Invariant interventions allow to restrict the propagation of effects to a subset of nodes. If a complex system can be partitioned into sparsely connected subsets of nodes, we can consider designing such interventions in order to modify the equilibrium values of each compartment  independently from each other. 
\begin{defn}
Given a partition of the SSCM nodes into $K$ compartments $\{C_k\}_{k=1,\dots,K}$. Given interventions on each compartment, parameterized by respective parameters
$u_k$, leading to the intervened SSCM equilibrium solution $\bx^{(u_1,\dots,u_k,\dots,u_K)}(\btheta)$. Interventions are compartmentalized when for all $k$, for all nodes
$j\in C_k$, component $\bx_j^{(u_1,\dots,u_k,\dots,u_K)}(\btheta)$
does not depend on $u_m$
for $m\neq k$.
\end{defn}
The following result guarantees that if the nodes influencing other compartments are made invariant, interventions on each compartment can be designed and performed independently from each other as their effects remain confined to their own compartment.

\begin{prop}\label{prop:compart}
	Given a partition $\{C_k\}$ of the SSCM nodes. If for each compartment $k$ there exists one invariant soft intervention performed on structural equations such that intervened, auxiliary and invariant nodes belong to the compartment, and all nodes of this compartment that have an outgoing arrow pointing to a different compartment are invariant, then those interventions are compartmentalized. 
\end{prop}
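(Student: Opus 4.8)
\textbf{Proof proposal for Proposition~\ref{prop:compart}.}

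The plan is to reduce the compartmentalized property to an inductive application of the invariant-intervention machinery of Proposition~\ref{prop:invar}, exploiting the graph structure: once the ``boundary'' nodes of each compartment are held invariant, the equilibrium equations of a given compartment are effectively decoupled from the interventions on the others. First I would fix a compartment $C_k$ and the invariant soft intervention guaranteed by hypothesis, whose intervened/auxiliary/invariant nodes all lie in $C_k$ and which makes every node of $C_k$ with an outgoing edge to another compartment invariant. By Proposition~\ref{prop:liesolv} (or Proposition~\ref{prop:localSolv}) we already know the full intervened SSCM is locally uniquely solvable around the reference point, so the map $(u_1,\dots,u_K,\btheta)\mapsto \bx^{(u_1,\dots,u_K)}(\btheta)$ is well-defined and smooth on a neighborhood of the identity; the question is purely which arguments each coordinate depends on.

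The key step is a substitution/restriction argument on the self-consistency equations. Write the structural equations block by block according to the partition. For a node $j\in C_k$, its parents either lie in $C_k$ or lie in another compartment $C_m$; in the latter case the parent is, by hypothesis, a boundary node of $C_m$ that has been made invariant, hence its equilibrium value equals its unintervened value $x^*_\ell(\btheta)$, which does not depend on any $u_m$. Therefore, if we substitute the (fixed, $u$-independent) equilibrium values of all the incoming boundary nodes into the structural equations for the nodes of $C_k$, we obtain a closed self-consistent subsystem in the variables $\{x_j\}_{j\in C_k}$ that involves only the intervention parameter $u_k$ (through the one invariant soft intervention inside $C_k$) and $\btheta$. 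Its Jacobian is the corresponding principal submatrix of $\mathrm{Id}-\partial\Bf/\partial\bx$ and, as in the proof of Proposition~\ref{prop:invar}, invertibility of the relevant submatrix plus the implicit function theorem give a unique smooth local solution depending only on $(u_k,\btheta)$. By uniqueness of the solution to the full system, this local solution must coincide with $\bx_j^{(u_1,\dots,u_K)}(\btheta)$ for $j\in C_k$, which establishes independence from $u_m$ for all $m\neq k$, i.e. compartmentalization.

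To make the substitution step rigorous I would set it up as a simultaneous solve over all compartments rather than compartment by compartment, to avoid a circular dependency: the boundary nodes are invariant by construction of each compartment's intervention, so one argues that the vector formed by all boundary equilibrium values equals its unintervened counterpart (independent of all the $u_k$), and then the interior of each $C_k$ is solved using only that data plus its own $u_k$. The main obstacle I anticipate is precisely this bookkeeping: verifying that the notion of ``invariant by leveraging a node in the same compartment'' from the definition of invariant soft interventions is strong enough that the boundary-node equilibrium values are genuinely frozen \emph{jointly} as all compartments are intervened on at once, and that the principal submatrices of $\mathrm{Id}-\partial\Bf/\partial\bx$ one needs to invert are nonsingular on a common neighborhood of the reference point. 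Both should follow from a compactness/neighborhood-intersection argument (finitely many compartments) and continuity of the Jacobian, together with the local diffeomorphism hypothesis inherited from Proposition~\ref{prop:invar}, but stating the decoupling cleanly — essentially that the dependency graph on the quotient by compartments, restricted to boundary nodes, has no feedback once invariance is enforced — is the part that needs care.
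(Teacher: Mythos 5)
Your proposal is correct in its core idea and reaches the same mechanism the paper relies on -- cross-compartment influence passes only through boundary nodes that are held at their unintervened equilibrium values, so each compartment's self-consistent subsystem closes up and can only depend on its own intervention parameter -- but you organize it differently. You set up a simultaneous ``freeze the boundary values, solve each compartment's subsystem, then invoke uniqueness of the full equilibrium'' argument, whereas the paper proceeds by induction, adding one compartment's intervention at a time: the first intervention leaves all other compartments at their unintervened values because its only outgoing influence is through invariant nodes; assuming this for $C_1,\dots,C_n$, the intervention on $C_{n+1}$ can be taken \emph{identical} to the one designed for the unintervened system, since $C_{n+1}$ receives exactly the same (unintervened) inputs through the invariant boundary nodes, and its own invariant boundary nodes in turn shield the rest. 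The sequential version buys a cleaner resolution of precisely the point you flag as your main obstacle: the joint freezing of the boundary values is not obtained by a compactness or continuity-of-Jacobian argument (that is the wrong tool here -- continuity alone cannot tell you the invariance constraint survives when several interventions act at once), but by the structural observation that the soft auxiliary intervention $f_k^{(u)}$ from the definition of invariant interventions depends only on the parents of the auxiliary node, and those parents see unchanged equilibrium values when the other compartments are intervened through invariant nodes; hence the very same soft intervention enforces the very same invariant values in the joint setting. Your simultaneous variant can be made rigorous along the same lines (verify that the candidate assembled from unintervened boundary values and the single-compartment intervened equilibria satisfies every equation of the jointly intervened system, then apply local unique solvability from Proposition~\ref{prop:liesolv}), and it buys a statement that does not privilege an ordering of the compartments; but as written, the step ``the boundary nodes are invariant by construction'' silently upgrades single-intervention invariance to joint invariance, which is exactly what has to be argued, and your proposed fix for it should be replaced by the parent-dependence argument above rather than a neighborhood-intersection argument.
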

\michel{following is unclear, maybe point to the full column rank condition limiting the number of free parameters}A fundamental aspect of this result is that, from the definition of invariant interventions, compartmentalization is valid over a range of parameters of the causal model (a neighborhood of the reference point) and a range of Lie interventions parameters (a neighborhood of the identity). This can be seen as a  
way to enforce interpretability of interventions by restricting their influence to a specific subsystem, at least for a range of parameter values.  An illustration of a setting compatible with Prop.~\ref{prop:compart} is provided in Fig.~\ref{fig:comparInter}, where the equilibria of two sparsely connected compartments are interdependent (notably, the causal ordering algorithm described in \citet{blom2020conditional} returns a single cluster merging both compartments). Enforcing invariance of the green nodes, each associated to one intervention ($u$ and $v$) within their compartment allows applicability of Prop.~\ref{prop:compart}.

\section{
Intervention design}\label{sec:design}
To address \textit{Challenge 2} of Sec.~\ref{sec:mrio}, we design interventions with implicit layers (see Appendix~\ref{app:meth} for additional details). 
\paragraph{Differentiable architecture.}
Base optimization relies on a differentiable architecture comprising one central module representing the cyclic SSCM. Essentially, the cyclic model is represented by an equilibrium layer following \cite{bai2019deep}, schematized in Fig.~\ref{fig:deepEq}: the differentiable module is designed such that forward and backward passes through the equilibrium layer use Anderson acceleration to solve a fixed point equation. This equilibrium layer is cascaded if necessary with parametric layers to achieve specific goals. The architectures are implemented using the PyTorch library.

\paragraph{Lie intervention optimization}
We design an architecture around the equilibrium module to optimize multiplicative intervention according to a loss, as represented in Fig.~\ref{fig:multOptim}. Parameters $\boldsymbol{u}$ of the Lie group element are optimized in order to minimize an objective $\mathcal{L}(\bx^{(u)})$ achieved by the equilibrium solution of the SSCM. 
This objective may include an additional regularization term, $D(\bx^{(u)},\bx^*)$ with regularization parameter $\lambda$, to enforce that some properties of the intervened system remain invariant or close to the original, non-intervened, equilibrium solution $\bx^*$. 

\paragraph{Learning invariant interventions.}
In order to enforce invariance of interventions based on Sec.~\ref{sec:invar}, we follow the procedure exemplified in Fig.~\ref{fig:invarInter}. We design two implicit layers with shared parameters $\btheta$, the first layer being unintervened giving the corresponding equilibrium values of the nodes, and the second one being invariantly intervened, for a fixed value of Lie intervention $u$ on the intervened node. In the intervened layer, we replace putative incoming arrows from the invariant node by arrows from the same node in the unintervened graph (as this replacement encodes the invariance assumption) and we replace the functional assignment of the auxiliary node by a Multi-Layer Perceptron (MLP), relying on universal approximation properties to learn a soft intervention that satisfies invariance. We use a least square loss between the intervened and unintervened equilibrium values of the invariant node in order to train the MLP.

\section{Experiments}\label{sec:exp}
The following toy and semi-synthetic experiments illustrate how our framework contributes addressing sustainability challenges exposed in Sec.~\ref{sec:mrio}. 
\paragraph{Evaluation of equilibrium estimation.} We first evaluate the performance of equilibrium layers in computing an accurate estimate of the SSCM solution $\bx^*$. For that we use the SSCM associated to the economic equilibrium of equation~(\ref{eq:leontief}) where we select a subset of sectors in order to vary the dimension of $A$. The full matrices $A$, as well as the final demands $\by$ are estimated from the Exiobase 3 dataset \citep{stadler2018exiobase} for years 2012-2018, using the \textit{Pymrio} library \citep{stadler2021pymrio} for five countries (France, Germany, Italy, USA, Great-Britain). We compare Anderson acceleration (see \citep{walker2011anderson}) for two different choices of the mixing parameter $\beta$, together with the baseline forward iteration approach that simply consists in iterating $\bx_{k+1}=\Bf(\bx_k)$. For each choice of dimension and fixed-point iteration algorithm, we compute the relative error
$
\frac{\|\bx^*-\Bf_{\btheta}(\bx^*)\|}{\|\bx^*\|}\,.
$
The results, averaged across countries and years, show that although forward iteration is the most accurate in lower dimensions, Anderson acceleration with a relaxation parameter $\beta=2.0$ performs better for SSCM dimensions larger than 50. Interestingly, Anderson acceleration with $\beta=1.0$ gives the worst performance, suggesting an appropriate choice of $\beta$ is key. 

\paragraph{Optimization of multiplicative Lie interventions.} In order to investigate \textit{Challenge 1}, we optimize the IO demand driven model of eq.~(\ref{eq:leontief}). The matrices $A$ and $R$, as well as the final demands $\by$ and sector output at equilibrium $\bx^*$ are estimated from yearly activity available in the Exiobase 3 dataset \citep{stadler2018exiobase}, using the \textit{Pymrio} library \citep{stadler2021pymrio}. While the data describes economic interactions across multiple countries, we design an economic equilibrium model of each country by neglecting those interactions, and extracting the blocs of matrices $A$ and $R$ relevant to the country under consideration. We design a distributed multiplicative Lie intervention on the activity of all 200 sectors of the database. The coefficient vector $\boldsymbol{\alpha}$ is then optimized in order to reduce the overall greenhouse gas (GHG) emissions cumulated across sectors (estimated by one component of the stressor vector $\bs$), while enforcing that the overall employment distribution over the sectors stays closest to the non-intervened economy, in order to mitigate challenges associated to reorganizing of economic activities (e.g. mass unemployment and the need for large scale professional reorientation programs). Using the $\ell_1$ norm for regularization, this leads to the following loss:
\begin{equation}\label{eq:joboptim}
\mathcal{L}(\boldsymbol{u}) = \boldsymbol{c}^\top \bx^{(u)} + \lambda \|\be^{(u)}-\be^*\|_1
\end{equation}
where  $\boldsymbol{c}$ is the GHG emission intensity of each sector, and $\be^{(u)}$ and $\be^*$ the intervened and unintervened distributions of employment across sectors (estimated by entry wise multiplication of $\bx^*$ with one row of matrix $R$). 
The graphs shown in Figs.~\ref{fig:countOpt}-\ref{fig:countOptDE} (top), illustrate the trade off between employment preservation and GHG emission reduction achieved by varying $\lambda$ for two different countries. Interestingly, the left tail of these curves reflect differences across countries, with Germany having less room than France for reducing emissions before starting reducing employment significantly. The sectors yielding the largest employment reduction also differ across countries, likely influenced both by the overall structure of each economy. 
%
%
%

%

\paragraph{Control of rebound effects.}
\begin{figure}
	\begin{subfigure}{.49\linewidth}
		\includegraphics[width=\linewidth]{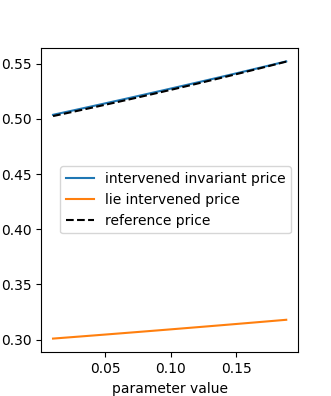}
		\subcaption{\label{fig:invarprice}}
	\end{subfigure}
	\hfill
	\begin{subfigure}{.49\linewidth}
		\includegraphics[width=\linewidth]{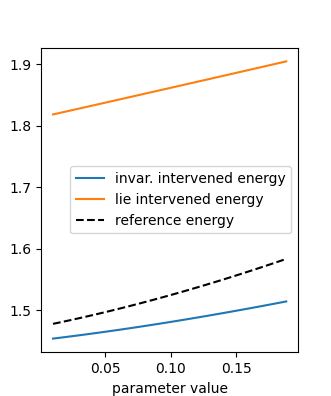}
		\subcaption{\label{fig:invarenergy}}
	\end{subfigure}
	\caption{Outcome of (non-invariant) Lie and invariant interventions on energy efficiency, compared to reference (unintervened) values, in the rebound model described in Fig.~\ref{fig:pricemod}: (a) unit price of the target sector, (b) total energy demand.\label{fig:priceExp}}
\end{figure}

\begin{figure*}
	\hspace*{1cm}
	\begin{subfigure}{.4\linewidth}
		\includegraphics[width=\linewidth]{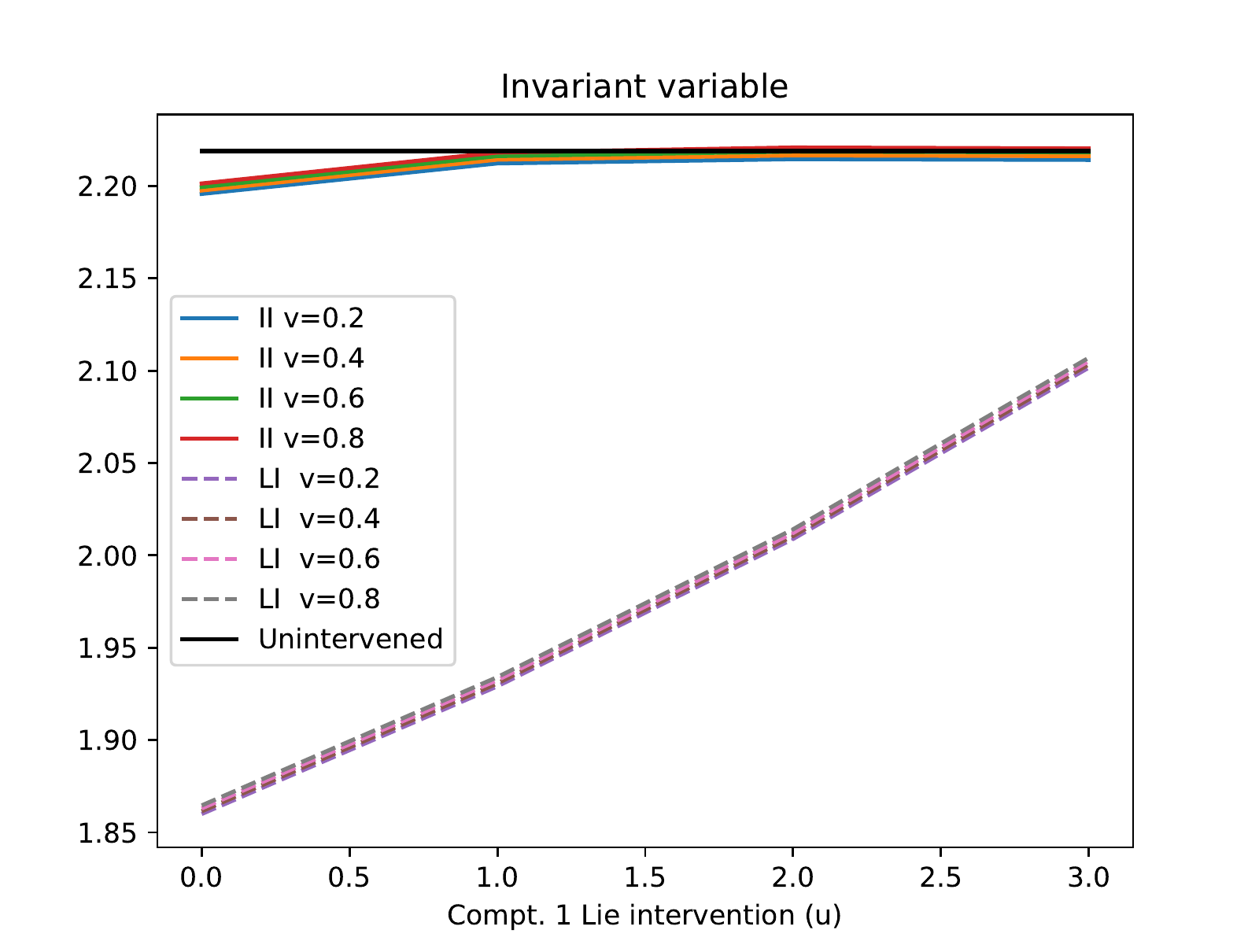}
		\subcaption{\label{fig:compart_invar}}
	\end{subfigure}
	\hfill
	\begin{subfigure}{.4\linewidth}
		\includegraphics[width=\linewidth]{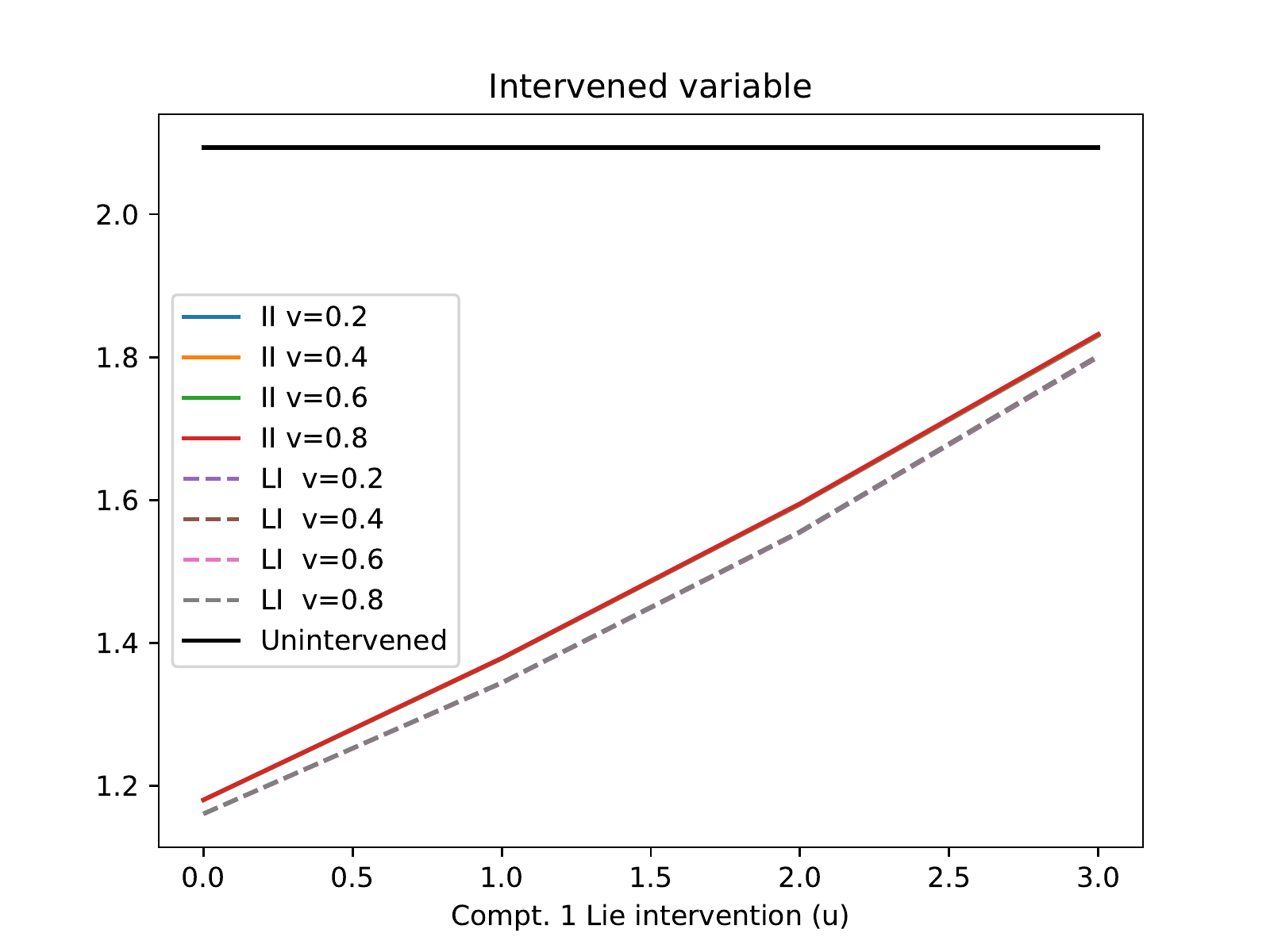}
		\subcaption{\label{fig:compart_inter}}
	\end{subfigure}
\hspace*{1cm}
	\caption{ (a-b) Outcome of the design of compartmentalized interventions for model of Fig.~\ref{fig:comparInter}. The unintervened node values (in black) are compared to invariant interventions (II, solid lines) and their corresponding Lie intervention (LI, dashed lines) (without enforcing invariance), for multiple values of the Lie interventions' parameters $(u,v)$. (a) Value of the invariant node in compartment 1. (b) Value of the intervened node in compartment 1.
		\label{fig:comparExp}}
\end{figure*}

To illustrate how \textit{Challenge 3} of Sec.~\ref{sec:mrio} can be addressed, we used our invariant intervention framework to prevent price rebound effects. We use a toy 3-sector model, with one energy sector and one target sector for which energy efficiency is increased, modeled by a multiplicative Lie intervention on the energy requirements coefficient of the Leontief matrix. The final demand of this target sector is taken as invariant node, and controlled by softly intervening on it through a modification of the unit price of this sector. The invariant intervention is learnt using an MLP with two hidden layers (see Appendix~\ref{app:meth} for details). Fig.~\ref{fig:pricemod} describes the two quantities that are intervened on: we make a multiplicative Lie intervention on the parameter represented by the red node (energy efficiency), and make sure there is no rebound by making the node
invariant to the drop of energy costs using an adaptive taxing policy. Fig.~\ref{fig:invarprice}-\ref{fig:invarenergy} compares 3 models: unintervened (called ``reference'' in the figure), Lie intervened (without enforcing invariance), and invariantly intervened. 
For a range of one parameter left free in the Leontief matrix, the results show the invariant intervention maintains the price close to the unintervened model (Fig.~\ref{fig:invarprice}), while this price is much lower for the Lie intervention (due to the rebound effect). The benefit of  invariance is demonstrated by the effect on the activity of overall energy demand of the economy (Fig.~\ref{fig:invarenergy}): for the Lie intervention, the rebound through prices leads to the so-called \textit{backfire} scenario: the actual energy savings are negative because usage increased beyond potential savings. In contrast, invariant intervention leads to a reduction of energy demand (relative to the unintervened system), as the rebound through prices is prevented. 

\paragraph{Compartmentalized interventions design.}
We further implement compartmentalized interventions and show its benefits for addressing \textit{Challenge~2} in Sec.~\ref{sec:mrio} in multi-sector economic models. We design a two compartment Leontiev model according to Fig.~\ref{fig:comparInter}. We optimize two invariant interventions, $u$ on compartment 1 and $v$ on compartment 2, to follow the conditions of Prop.~\ref{prop:compart}. The results provided in Fig.~\ref{fig:comparExp}, show  the invariant node of compartment 1 is unchanged by both values of $u$ and $v$  (Fig.~\ref{fig:compart_invar}), while the intervened node of this compartment changes value only as a function of its corresponding intervention $u$ (Fig.~\ref{fig:compart_inter}), in a way similar to the (non-invariant) Lie intervention.

\vspace*{-.3cm}
\section{Discussion}
We discuss here some limitations of our approach.
\paragraph{Linearity of the economic model}
The linear input-output model that we use should be understood as one way of modeling interactions between economic sectors, commonly used in environmental economics. It was chosen for its interpretability, illustrative purpose, practical relevance, and because there are established approaches to estimate parameters from economic data. However, it should not be understood that economic models are always linear. Note also that we combine this model with a non-linear demand mechanism to study rebound effects (see Challenge 3 and Section 5). Overall, moving towards non-linear models, as allowed by our setting, is in line with the development of computational models in economy, and notably Integrated Assessment Models (IAM) investigating the complex interactions between climate change and societies. 
\paragraph{The case of multiple equilibria.}
The equilibrium picked by the equilibrium layer depends on the initialization of the estimate of the equilibrium point in the fixed point iteration algorithm implemented by this layer (this can be described using the notion of ``basin of attraction''). While the theory and algorithms developed in this paper focus on the behavior of the causal model in a neighborhood of an given unintervened equilibrium, a prealable grid search for all equilibria may be performed in the most general setting. This may be avoided for the following reasons. From a theoretical perspective, conditions of existence and uniqueness of equilibria are available for many classical models. For example in our application, the Hawkins–Simon condition guarantees the existence of a non-negative output vector that solves the equilibrium relation \citep{hawkins1949note}. 
From a practical perspective, we are often interested mainly in intervening on the empirically observed equilibrium. For models based on unintervened observed data, we can thus check that the simulated unintervened equilibrium matches the observed data. If however there is a mismatch between the equilibrium obtained by the deep equilibrium layer and the one we are interested in, we can enforce the initialization of the fixed point iteration algorithm in the neighborhood of the expected equilibrium. Our experiments were run with a fixed initialization of the equilibrium point (zero). 

\section{Conclusion}
\vspace*{-.2cm}
We introduced a differentiable soft intervention design framework for general equilibrium systems. We argue those are more likely to approximate deployable interventions in real-world complex systems, e.g. to address key challenges of the transition to sustainable economies. Theoretical results and algorithmic tools are provided to design interventions with desirable invariance properties under the assumption that the considered system is in equilibrium and model parameters are known. Further work in this direction will need to address identifiability of the considered models from observational or experimental data. 


\begin{acknowledgements} 
MB is grateful to Philipp Geiger for insightful discussions. This work was supported by the German Federal Ministry of Education and Research (BMBF): T\"ubingen AI Center, FKZ: 01IS18039B; and by the Machine Learning Cluster of Excellence, EXC number 2064/1 - Project number 390727645. 
\end{acknowledgements}

\bibliography{cyclic}

\appendix
\onecolumn

\section{Additional background}\label{app:back}
\subsection{Smooth manifolds}
While many non-equivalent definitions exist for smooth manifold, we follow \cite{lee2013smooth} in defining smoothness as infinite continuously differentiability of functions. A diffeomorphism is then a smooth bijection whose inverse is also smooth.

For an n-dimensional topological manifold $M$, an atlas is a collection of coordinate charts $(U_k,\varphi_k)$ such that $U_k$'s are open sets of $M$ covering it, and such that the mappings $\varphi_k: U_k\mapsto \varphi_k(U_k)\subset \mathbb{R}^n$ are homeomorphisms (continuous bijection with continuous inverse). Briefly, the atlas is smooth whenever $\varphi_k \circ \varphi_n^{-1}$ is are diffeomorphisms whenever well defined, and  a smooth manifold is a topological manifold associated with a maximal smooth atlas.

A smooth map $F: M\rightarrow N$ between two smooth manifolds $M$ and $N$ is a function such that for any chart $(U,\varphi)$ and $(V,\psi)$, $\psi\circ F \circ \varphi^{-1}$ is smooth whenever well defined.

\subsection{Lie groups}
We first provide a formal definition of groups. 
\begin{definition}[Group]\label{def:grp}
	A set $G$ is a group if it is equipped with a binary operation $``\cdot'':G\times G\rightarrow G$ satisfying
	\begin{enumerate}
		\item Associativity: $\forall a,b,c\in G$, $(a\cdot b)\cdot c = a\cdot(b\cdot c)$
		\item Identity: There exists $e\in G$ such that $\forall a \in G$, $a\cdot e = e \cdot a = a$.
		\item Inverse: $\forall a\in G$, there exists $b\in G$ such that $a\cdot b = b\cdot a = e$. This inverse is denoted $a^{-1}$.
	\end{enumerate}
\end{definition}
Then a Lie group is essentially a group that is also a smooth manifold. 
\begin{definition}[Lie Group]
	A Lie Group $G$ is a nonempty set satisfying the following conditions:
	\begin{itemize}
		\item $G$ is a group.
		\item $G$ is a smooth manifold.
		\item The group operation $\cdot:G\times G \rightarrow G$ and the inverse map $.^{-1}:G\rightarrow G$ are smooth.
	\end{itemize}
\end{definition}

We are often interested in sets of transformations, which respect a group structure, but are applied to objects that are not necessarily group elements. 
This can be studied through group actions, which describe how groups \emph{act} on other mathematical entities. 
\begin{definition}[Lie group Action]\label{def:grp_act}
	Given a Lie group $G$ and a set $X$, a Lie group action (or smooth group action) is a function $\cdot_X:G\times X \rightarrow X$ such that the following conditions are satisfied.
	\begin{enumerate}
		\item Identity: If $e\in G$ is the identity element, then $e\cdot_X x = x$, $\forall x \in X$.
		\item Compatibility: $\forall g,h \in G$ and $\forall x \in X$, $g\cdot_X (h\cdot_X x) = ((g\cdot h)\cdot_X x)$
		\item Smoothness: the map $\cdot_X:G\times X \rightarrow X$ is smooth.  
	\end{enumerate}
\end{definition}

\subsection{Cyclic causal models}
A classical type of hard interventions are \textit{perfect interventions}, which replace the structural assignments of a given variable $X_k$ by an assignment $X_k\coloneqq \xi_k$, with $\xi_k$ constant \citep{blom2020conditional}. It thus eliminates the arrows in the causal graph pointing to this variables, and makes this variable deterministic. 

In particular, tracing the effects of perfect interventions requires special assumptions. In contrast, soft interventions may be read from the so-called causal ordering graph, which can be built from the original SCM graph. Broadly construed, a unique causal ordering graph can be constructed with several algorithms \citep{blom2020conditional}. This is a directed cluster graph that contains groups of variables connected by oriented edges (starting from single variable in a given cluster, and pointing to another cluster). By construction, the resulting graph between clusters entailed by these edges is directed and contains no cycles. As a consequence, the effect of generic soft intervention on clustered variables can be easily read from this graph. 


\subsection{Link between equilibrium and dynamic models}
The equilibrium of eq.~(\ref{eq:leontief}) can be thought of as the asymptotic value of $\bx$ in a dynamic model (see~Appendix~\ref{app:back})
\[
\frac{d\bx}{dt} = A \bx +\by - \bx\,,
\]
where the increase or decrease of the sectors' activity is controlled by the imbalance between their demand $A\bx+y$ and their current output $\bx$. More generally, any fixed point-equation can be thought of a the equilibrium value of some dynamical system, for example by considering a numerical algorithm that converges to it. However, the relationship between dynamical systems and self-consistent equation is not one to one. Notably, we can rescale the time evolution of a stable dynamical system to create many other that converge to the same self-consistent equation. Moreover, by inverting the arrow of time, we can obtain systems for which the self-consistent equation is an unstable equilibrium. As mentioned in main text, in this work we leave aside the dynamical aspects to focus on the equilibrium properties.

\subsection{MRIO models}
Multi-regional input-output models are built based on macro-economic information, notably the one provided by the National Accounts of the countries involved in the model. 
The technical coefficient matrix of eq.~(\ref{eq:leontief}) is computed from so-called \textit{Supply and Use Tables} that form the basis of National Accounts.
The unit used to measure output is frequently monetary (e.g., EUR) due to the data collection process and to allow an homogeneous treatment of the economic flows. However, under homogeneity and linearity assumptions, the output of each sector may be converted in appropriate physical units using unit prices and material flow data. Moreover, there also exist hybrid MRIO models which include information regarding physical flows in the economy (energy, raw materials, ...) and the are combined with monetary information to ensure the best level of self-consistency.

\section{Proof of main text results}\label{app:proofs}
\subsection{Proof of Proposition~\ref{prop:localSolv}}
\begin{proof}
	Assuming the SSCM is locally diffeomorphic entails that the Jacobian of $\bx\rightarrow \bx-\Bf(\bx,\btheta^{\rf})$ is invertible at $\bx=\bx^{\rf}$. Then the Jacobian of   $(\bx,\btheta)\rightarrow (\bx-\Bf(\bx,\btheta),\,\btheta)$ is also invertible at $(\bx^{\rf},\btheta^{\rf})$ (due to its block triangular structure). Using the inverse function theorem for smooth maps between smooth manifolds \cite[Theorem 4.5]{lee2013smooth}, this implies that there exists connected open neighborhoods $(U,V)$ of $(\bx^{\rf},\btheta^{\rf})$ and $(\textbf{0},\btheta^{\rf})$ such that  
	\begin{align*}
		g \colon   \phantom{++} U & \rightarrow  V \\
		(\bx,\btheta) & \mapsto  (\bx-\Bf(\bx,\btheta),\,\btheta)
	\end{align*}
	is a diffeomorphism. As a consequence, self-consistent solutions $(\bx,\btheta)$ in $U$ are given by $S=g^{-1}((\{0\}\times \mathcal{T})\cap V)$. It is a submanifold of same dimension as $\mathcal{T}$ for the following reasons:
	\begin{itemize}
		\item $S$ is a manifold diffeomorphic to $(\{0\}\times \mathcal{T})\cap V$ and thus has the same dimension \citep[Theorem 2.17]{lee2013smooth},
		\item $(\{0\}\times \mathcal{T})\cap V$ is an open submanifold because $V$ is open, and thus has the same dimension as $\{0\}\times \mathcal{T}$ \citep[Proposition 5.1]{lee2013smooth}
		\item $\{0\}\times \mathcal{T}$ has the same dimension as $\Tcal$ because it is diffeomorphic to it \citep[Propositions 5.3 and 2.17]{lee2013smooth}.
	\end{itemize} 
	Let us now define the cartesian projection
	\begin{align*}
		\pi \colon \phantom{++} U & \rightarrow \mathcal{T}\\
		(\bx,\btheta) & \mapsto  \btheta\,,
	\end{align*}
	we want to establish that there exist an open neighborhood $U_{\btheta}$ of $\btheta^{\rf}$ such that there is a unique self-consistent solution for each parameter choice in this set
	$\pi_{|S}$ is a smooth embedding because it is an injective smooth immersion, and is open\footnote{$\pi_{|S}$ is open because $\pi_{|S}\circ g^{-1}_{|(\{0\}\times \mathcal{T})\cap V}$ is a smooth submersion and thus open by Proposition 4.28 in \cite{lee2013smooth}, and $g_S$ is also open as the restriction of a diffeomorphism. 
	}, by \citet[Proposition 4.22 ]{lee2013smooth}). As a consequence $\pi (S)$ is an embedded submanifold of $\Tcal$ diffeomorphic to $S$ (by \citet[Proposition 5.2]{lee2013smooth}). Since we have shown that the dimension of $S$ is the dimension of $\Tcal$, then $\pi (S)$ is a submanifold of same codimension $0$ (same dimension as its ambient manifold) and is thus an open submanifold of $\Tcal$ (Proposition 5.1 in \cite{lee2013smooth}). As a consequence, $\pi (S)$ is open, such that there is an open neighborhood of $\btheta^{\rf}$ included in it. 
	Then for any parameter chosen in this neighborhood, there is one solution to the self-consistency equation, by definition of the image. Assume there are two distinct solution for this parameter, then the mapping  $(\bx,\btheta)\rightarrow (\bx-\Bf(\bx,\btheta),\,\btheta)$ would not be a diffeomorphism.
\end{proof}

\subsection{Proof of Proposition~\ref{prop:liesolv}}
\begin{proof}[Proof]
	We extend the smooth parameterization of function $f$ by $\btheta$ to get a smooth parameterization of the intervened functional assignments by $\bar{\btheta}=(g,\btheta)$. Indeed, the mapping
	\[
	(x,\bar{\btheta})\mapsto g\cdot f(x,\theta)
	\]
	is smooth as a composition of 
	the following smooth maps
	\[
	(\bx,\btheta,g) \underset{f \mbox{ smooth }}{\mapsto} (f(\bx,\btheta),g) \underset{\varphi \mbox{ smooth }}\mapsto \varphi(g,f(\bx,\btheta)) = g\cdot f(\bx,\btheta)
	\]
	where the smoothness of each transformation stem from the definition of SSCM and Lie interventions, respectively.
	Proposition~\ref{prop:localSolv} applied around the extend parameter $(e,\btheta^{\rf})$ implies that there exists a   neighborhood $U_{(e,\btheta^{\rf})}$ of this point such that the intervened solution is uniquely solvable and the mapping from the extended parameter to the solution is smooth. There exists moreover a product neighborhood $ U_L\times U_{\btheta} \subset U_{(e,\btheta^{\rf})}$ (this is a basic property of neighborhoods on product spaces). 
	By continuity of the partial derivative of the intervened functional assignment (due to smoothness of the Lie group action), dependency on the parents of the intervened variables is preserved in a neighborhood of the identity, such that the intervention is soft in the considered neighborhood. 
\end{proof}

\subsection{Proof of Propostion~\ref{prop:invar}}
\begin{proof}
	The Lie intervention parameterized by $u$ guaranties solvability of the SSCM is preserved in a neighborhood of the identity (Proposition~\ref{prop:liesolv}), and we denote $x^{(u)}(\btheta)$ the unique solution in such neighborhood, with $x^{(e)}(\btheta)=x^{*}(\btheta)$. 
	The Jacobian $J^{\btheta}_{x^*_{\parents_k}}(\btheta^{\rf})$ is the Jacobian of the mapping from the parameters $\btheta$ to the vector consisting of the parent nodes of $k$ at equilibrium. Because this Jacobian is full column rank, there exists  a neighborhood of $e$ such that for any fixed $u$ in it, the mapping $\btheta \mapsto \bx_{\parents_k}^{(u)}(\btheta)$ is injective in a neighborhood of the reference parameter. As a consequence the restriction to its image is a diffeomorphic map between manifolds. Let us denote  $\psi^{(u)}$ its inverse. 
	
	Consider the SSCM obtained by performing a hard intervention $x_j\coloneqq x_j^*(\btheta)$. Because the original SSCM is locally diffeomorphic at $(\bx^{\rf},\btheta^{\rf})$, $\{x_j\coloneqq x_j^*(\btheta)\}$ is a smooth assignment, and because additionally the Jacobian of the mapping $\bx_{-j}\rightarrow \bx_{-j}-\Bf_{-j}(\bx_{-j},\btheta^{\rf})$ is invertible, then this hard intervened system is also locally diffeomorphic at  $(\bx^{\rf},\btheta^{\rf})$ (exploiting the block diagonality of the Jacobian of its assignment). As a consequence, Lie intervention with parameter $u$ on node $i$ of this (already hard-intervened) system leads to a smooth intervened equilibrium $x^{(u)}$.
	
	Let us recall that the partial derivative $\frac{\partial x^*_j}{\partial x_k }_{|\btheta=\btheta^{\rf}}$ corresponds to the derivative with respect to the hard interventions value. 
	The assumption $\frac{\partial x^*_j}{\partial x_k }_{|\btheta=\btheta_{\rf}}\neq 0$ thus entails, by the inverse function theorem, that there exists also a smooth mapping $\phi^{(u)}$ such that $x^{(u)}_k = \phi^{(u)}(x^*_j(\btheta))$ in a neighborhood of $(e, \btheta_{\rf})$. As a consequence, the mapping defined as $f^{(u)}_k = \phi^{(u)}\circ x_j^*(.)\circ \psi^{(u)}$ is a soft intervention replacing $f_k$ achieving the same equilibrium values as the above hard-intervened system under Lie interventions, and in particular the invariance constraint $x^{(u)}_j(\btheta)=x^{*}_j(\btheta)$.
\end{proof}

\subsection{Proof of Proposition~\ref{prop:compart}}
\begin{proof}
	We proceed iteratively by adding one intervention after the next. First intervention on compartment $C_1$ leaves invariant the equilibrium values of the remaining compartments $C_{-1}$ as the only node from $C_{1}$ influencing them is invariant. 
	
	Given $C_1,...,C_n$ satisfy invariance with respect to each others interventions, consider intervening on $C_{n+1}$. As $C_{n+1}$ receives only inputs from intervened upon compartments $C_1,...,C_n$ through invariant nodes, the invariant intervention on it can be designed identical as for the non-intervened system. Moreover, invariance of the nodes having outgoing arrows to other compartments ensures that the equilibrium values of other (potentially intervened upon) compartments $C_{-(n+1)}$ remains invariant. 
	\michel{same as above proof, be explicit about the interventions (which equations are replaced, which variables are taken a parameters relative to )} 
\end{proof}
\michel{add condition for existence of invariant intervention on each node, given distributed parameters and restricted connectivity, those are weaker than the general definition. }

\section{Additional theoretical results}\label{app:add}
\subsection{Motivating example of Sec.~\ref{sec:invar}}
Let us restate the unintervened assignments of this example.

\begin{eqnarray*}
	\setlength{\jot}{0pt}
	x &= &\tau \,,\\
	y &= & (\alpha x +\beta z) \,, \\
	z &=&  \gamma y \,.
\end{eqnarray*}
The equilibrium solution then writes

\begin{eqnarray*}
	\setlength{\jot}{0pt}
	x^* &= &\tau \,,\\
	y^* &= & \frac{\alpha \tau}{1-\beta \gamma} \,, \\
	z^* &=&  \frac{\gamma \alpha \tau}{1-\beta \gamma} \,.
\end{eqnarray*}

Applying multiplicative Lie interventions on both $x$ an $y$ leads to the assignments
\begin{eqnarray*}
	\setlength{\jot}{0pt}
	x &= &\tau \,,\\
	y &= &u_y (\alpha x +\beta z) \,, \\
	z &=& u_z \gamma y \,.
\end{eqnarray*}
which leads to the intervened equilibrium
\begin{eqnarray*}
	\setlength{\jot}{0pt}
	x^{(\boldsymbol{u})} &= &\tau \,,\\
	y^{(\boldsymbol{u})} &= & \frac{u_y\alpha \tau}{1-u_z u_y\beta \gamma} \,, \\
	z^{(\boldsymbol{u})} &=&  \frac{u_y u_z\gamma \alpha \tau}{1-u_y u_z\beta \gamma} \,.
\end{eqnarray*}
We can thus notice that choosing $u_z = \frac{1}{u_y}$ makes the intervened equilibrium value invariant for any choice of parameters $(\tau,\alpha,\beta,\gamma)$.

\section{Methods}\label{app:meth}
Following \citep{bai2019deep}, we implemented implicit layers using the pyTorch library \citep{NEURIPS2019_9015}. We use Anderson acceleration with $m=5$ previous iterations and a relaxation parameter  $\beta=2.0$ (based on preliminary analysis) to compute iteratively the fixed points of the implicit layers, for both forward and backward passes, with a maximum number of iterations of 5000 and a tolerance of .0001. Our experiments were run with a fixed initialization of the equilibrium point (zero). 

Optimization of interventions is done using backpropagation with adaptive moment estimation (Adam), with a learning rate of .001 and 10000-20000 iterations. 
Soft interventions to enforce invariance are learned with two hidden layer perceptrons, with 20 and 10 hidden units respectively for the first and second layers, and all layers have ReLU activation functions. At each iteration, free parameters $\btheta$ are sampled from an factorized Gaussian distribution whose means and variances are chosen to cover the neighborhood of the reference point. Optimization of invariant soft interventions is performed by sampling at each iteration from a range of values of the Lie intervention and unintervened model parameters.

Toy experiments use artificially generated parameters for few sectors economic models (this is the case of paragraphs ``Control of rebound effects'' and ``Compartmentalized intervention design''), whose structure is described in Figs.~2a and 1b. Instead, the semi-synthetic experiments use 200 sector economic models based on realistic parameters from the Exiobase3 dataset, as described in the paragraphs ``Evaluation of equilibrium estimation'', and ``Optimization of multiplicative Lie interventions''.

Code for the toy optimization experiments is provided at  \href{https://github.com/mbesserve/lie-inter}{https://github.com/mbesserve/lie-inter}. 

\section{Supplemental discussion}\label{sec:supdisc}
\paragraph{Socio-economic impacts of environmental policies}
In full generality, whether and which environmental policies have a negative socio-economic impact is a highly debated topic. In main text, we argue that straightforward measures that could be taken to significantly contribute to achieve environmental goals typically have a short-term socio-economic cost. The example repetitively used in our paper is activity reduction of greenhouse gas emitting sectors, wish has straightforward \textit{short-term impacts} on their employment \citep{oei2020coal}. We are not aware of literature challenging the view that such classical socio-economic and environmental goals are at least to some extent at odds and require tradeoff from the standpoint of political decision makers. \textit{On the longer term}, the feasibility of making these goals compatible based on economic concepts such as Green Growth is debated \citep{jakob2014green,hickel2020green}.



\end{document}